\documentclass[11pt]{article}
\usepackage[margin=1in]{geometry}
\usepackage{amsmath,amssymb,amsthm,mathtools}
\usepackage{booktabs,xcolor,enumitem,array}
\usepackage[colorlinks=true,linkcolor=blue!50!black,citecolor=blue!50!black,urlcolor=blue!50!black]{hyperref}
\usepackage{tikz}
\usepackage{pgfplots}
\pgfplotsset{compat=1.16}
\usepgfplotslibrary{groupplots}
\usepackage{algorithm}
\usepackage{algpseudocode}
\usetikzlibrary{arrows.meta,positioning,fit,calc}
\algrenewcommand\algorithmicrequire{\textbf{Input:}}
\algrenewcommand\algorithmicensure{\textbf{Output:}}

\newtheorem{theorem}{Theorem}
\newtheorem{lemma}{Lemma}
\newtheorem{proposition}{Proposition}
\newtheorem{corollary}{Corollary}

\theoremstyle{definition}
\newtheorem{definition}{Definition}
\newtheorem{example}{Example}
\theoremstyle{remark}
\newtheorem{remark}{Remark}

\newcommand{\E}{\mathbb{E}}
\newcommand{\I}{I}
\newcommand{\KL}{\mathrm{KL}}
\newcommand{\Jl}{\mathcal J_\lambda}
\newcommand{\Js}{J^\star}
\newcommand{\JF}{J^{F}}
\newcommand{\Gap}{\Delta}
\newcommand{\epsF}{\epsilon_F}
\newcommand{\R}{\mathbb R}
\newcommand{\cX}{\mathcal X}
\newcommand{\cY}{\mathcal Y}

\newcommand{\cC}{\mathcal C}

\newcommand{\cD}{\mathcal D}

\newcommand{\Pset}{\mathcal P(\cY)}
\newcommand{\Ub}{\mathcal U}
\newcommand{\TV}{\mathrm{TV}}
\newcommand{\hb}{h_{\rm b}}
\newcommand{\Id}{\mathbf I}
\newcommand{\Aw}{\tilde A}
\newcommand{\Ft}{\tilde F}
\newcommand{\Sg}{S_{\mathrm G}}

\DeclareMathOperator*{\argmin}{arg\,min}
\newcolumntype{L}[1]{>{\raggedright\arraybackslash}p{#1}}

\title{What Should an Information Bottleneck Encoder Learn?\\ Separating Representation Learning from Rate Control}

\author{
Hassan Tavakoli, \emph{Member, IEEE}\\\
{School of EECS}\\
{Oregon State University}\\
Corvallis, OR 97331, USA\\
\texttt{tavakolh@oregonstate.edu}
\and
Bella Bose, \emph{Life Fellow, IEEE}\\
{School of EECS}\\
{Oregon State University}\\
Corvallis, OR 97331, USA\\
\texttt{Bella.Bose@oregonstate.edu}
}

\date{}

\begin{document}
\maketitle

\begin{abstract}
The information bottleneck (IB) trades compression of an input $X$ against prediction of a target $Y$ through a multiplier $\beta=1/\lambda$, yet each operating point is usually served by training, or fully conditioning, a new encoder.
For jointly Gaussian data, we show that the IB solution factorizes as $A(\beta)=D(\beta)W$: the information-bearing directions $W$ do not depend on $\beta$, and $\beta$ acts only through scalar gains.
We ask whether this separation of \emph{representation} from \emph{rate control} survives beyond Gaussian models, treating the budget as a given operating parameter rather than a quantity to be optimized.
For a budget-free representation \(F(X)\), the \emph{separation gap} is controlled by the \emph{sufficiency deficit} $\epsilon_F=I(X;Y\mid F(X))$: We show there is at least $\epsilon_F$ in the worst case for finite inputs, and it is bounded uniformly in $\lambda$ by a function of $\epsilon_F$ for finite label alphabets. The same certificate extends to a new target distribution and to pooled multi-domain data, where a conditional-alignment term appears.
On MNIST, SVHN, CIFAR-10 and STL-10 (10 seeds, eight prespecified budgets), a frozen CNN representation with one compressor--decoder pair per budget reduces estimated training multiply--accumulate operations (MACs) by $7.7\times$ and wall-clock time by about $2\times$ relative to end-to-end VIBs, and attains higher budget-averaged accuracy ($65.6\%$ versus $57.3\%$ per domain; $67.1\%$ versus $57.1\%$ pooled), with the advantage concentrated at strong compression, where the VIBs collapse. At weak compression ($\lambda\le0.036$), VIB has higher cross-entropy and ECE; its surrogate Lagrangian is higher per domain, indistinguishable from the pooled VIB in the macro average, and lower on the pooled test set.
\end{abstract}

\noindent\textbf{Index Terms.} Information bottleneck, rate--relevance tradeoff, Gaussian IB, sufficient statistics, variational information bottleneck, domain adaptation.

%% ---------------------------------------------------------------------------
\section{Introduction}\label{sec:intro}
The IB of Tishby, Pereira and Bialek \cite{tishby99} seeks a stochastic representation $T$ of $X$ that is maximally informative about $Y$ subject to a rate constraint on $I(X;T)$. The Lagrangian turns $\beta$ into a control knob whose solution path has phase transitions \cite{chechik05,wu19}. Deep variational IB (VIB) \cite{alemi17} makes the problem tractable for neural encoders, but a trained network realizes one $\beta$. Rate-conditioned networks (FiLM \cite{perez18}; multi-rate and modulated autoencoders \cite{bae23,yang20}) condition the \emph{whole} network on $\beta$. We ask the information-theoretic question left open: \emph{what part of the IB solution must know $\beta$?} The paper concerns VIB methods for representation learning and adaptation, with the budget $\beta$ a given input that changes at deployment and is never optimized here.

\paragraph{The Gaussian clue.}
For jointly Gaussian $(X,Y)$ the IB optimum is $T_\beta=A(\beta)X+\xi$ with
\begin{equation}
A(\beta)=D(\beta)\,W,\qquad \alpha_i(\beta)^2=\frac{[\beta\rho_i^2-1]_+}{1-\rho_i^2},
\label{eq:intro-factor}
\end{equation}
where $W$ is a budget-free linear map computed from second moments and $D(\beta)=\mathrm{diag}(\alpha_i(\beta))$ carries all $\beta$-dependence. Changing the budget changes the \emph{rate allocation}, not the \emph{information-bearing coordinates}. This is the known Gaussian IB solution \cite{chechik05}; it is our motivation.

\paragraph{Representation--rate separation.}
We study the architecture (Fig.~\ref{fig:arch})
\begin{equation}
U=F(X),\qquad T\sim C_\lambda(\cdot\mid U),\qquad \widehat Y=G_\lambda(T),\qquad \lambda=1/\beta,
\label{eq:arch}
\end{equation}
where $F$ is learned without a rate penalty and then frozen, while the compressor $C_\lambda$ and decoder $G_\lambda$ are fitted to the desired operating point. In the Gaussian case $F(X)=WX$ and $C_\beta(u)=D(\beta)u+\xi$. The decoder is the posterior map of $T_\beta$ and does depend on $\beta$; we claim no budget-free decoder.

\begin{figure}[t]
\centering
\begin{tikzpicture}[node distance=13mm,>=Stealth,every node/.style={font=\small}]
\node[draw,rounded corners] (x) {$X$};
\node[draw,rounded corners,right=of x,fill=blue!8] (f) {$F$ (\(\beta\)-free)};
\node[draw,rounded corners,right=of f,fill=orange!15] (c) {$C (\beta)$};
\node[draw,rounded corners,right=of c] (g) {$G (\beta)$};
\node[right=of g] (y) {$\widehat Y$};
\node[above=6mm of c] (b) {$\lambda=1/\beta$};
\draw[->] (x)--(f);\draw[->] (f)--node[above]{$U$}(c);\draw[->] (c)--node[above]{$T$}(g);\draw[->] (g)--(y);\draw[->] (b)--(c);\draw[->] (b)-|(g);
% two dashed boxes
\node[draw,dashed,rounded corners,inner sep=6pt,fit=(f)(c),
      label={[font=\small\bfseries]below:Encoder}] (enc) {};
\node[draw,dashed,rounded corners,inner sep=6pt,fit=(g),
      label={[font=\small\bfseries]below:Decoder}] (dec) {};
\end{tikzpicture}
\caption{Representation--rate separation. $F$ is trained once without reference to the budget and characterized by the scalar $\epsilon_F\triangleq I(X;Y|F(X))$; \(C\) and \(G\) depend on the budget, \(\beta\) .}
\label{fig:arch}
\end{figure}

\paragraph{Questions and contributions.}
\textbf{(Q1) Separation:} when is a $\beta$-free $U=F(X)$ sufficient for the entire IB path, how does the loss grow with its sufficiency deficit, and what must the $\beta$-dependent module express? \textbf{(Q2) Cost:} what are the accuracy, surrogate-objective and compute consequences of freezing $F$ in practice?
\begin{enumerate}[leftmargin=*,itemsep=1pt]
\item \textbf{Separation theory (Q1; theorems).} An exact decomposition of the separation gap into representation and compressor terms; exact separation iff $\epsF=0$; the uniform bound; certificates for adaptation and pooled domains; an exact decoder-error decomposition and a decoder-sharing criterion for pooled domains; exact separation for Gaussians with a characterization of sufficient linear representations.
\item \textbf{Evidence (Q2; numerical).} A 10-seed study on four datasets with two separated settings (per-domain, pooled) and six baseline settings (independent VIB, FiLM-conditioned VIB, OIB; per-domain and pooled), confidence intervals, and an explicit cost model.
\end{enumerate}

\begin{table}[t]
\centering\small
\caption{Positioning. We treat $\beta$ as a \emph{given} operating parameter and ask
what must depend on it.}
\label{tab:position}
\begin{tabular}{@{}L{4.5cm}L{2cm}L{5cm}@{}}\toprule
Method & Role of $\beta$ & What depends on $\beta$ \\\midrule
VIB, nonlinear IB \cite{alemi17,kolchinsky19} & given & A whole Encoder/Decoder networks (one per $\beta$)\\\midrule
Rate-condition \cite{perez18,bae23,yang20} & given & A whole network \\\midrule
Flexible VIB (FVIB) \cite{kudo24} & given & closed-form rescaling (Taylor approx.) \\\midrule
Opportunistic IB (OIB) \cite{binucci24} & given & Gaussian-IB linear loadings \\\midrule
$\beta$-free IB \cite{wang26} & -- & $\beta$-free  \\\midrule
\textbf{Ours} & \textbf{given} & New Separation method: only $(C,G)$ per \(\beta\) and $F$ is $\beta$-free  \\
\bottomrule\end{tabular}
\end{table}

%% ---------------------------------------------------------------------------
\section{Problem formulation and related work}\label{sec:prelim}

Logarithms are natural; information is in nats. $X$ takes values in a standard Borel space $\cX$ and $Y$ in a finite set $\cY$, $K=|\cY|$, except in Section~\ref{sec:gaussib} and Theorem~\ref{thm:gauss}, where $Y$ is continuous. A \emph{channel} is a conditional law $P_{T|X}$ with Markov chain $Y-X-T$. For $\lambda>0$,
\begin{align}
\Jl(P_{T|X}) & =H(Y|T)+\lambda\,\I(X;T)=H(Y)-\I(T;Y)+\lambda\,\I(X;T), &
\Js(\lambda) & =\inf_{P_{T|X}}\Jl(P_{T|X}).
\label{eq:J}
\end{align}
For continuous $Y$ we use $\Jl:=-\I(T;Y)+\lambda\I(X;T)$, which differs by the constant $H(Y)$; all gaps are unaffected. Multiplying \eqref{eq:J} by $\beta$ recovers $\mathcal J_\beta=I(X;T)-\beta I(T;Y)$ up to a constant. $\Js$ is an infimum of affine functions of $\lambda$, hence concave and nondecreasing, with $H(Y|X)\le\Js(\lambda)\le H(Y)$. The relevance--rate function is $R^\star(\mathsf r)=\sup\{\I(T;Y):\I(X;T)\le\mathsf r\}$.

\paragraph{Variational surrogate.}
Neural IB minimizes $\mathcal L_\lambda=\mathrm{CE}+\lambda R$ with $\mathrm{CE}=\E[-\log g(Y|T)]$ for a decoder $g$ and $R=\E_X\KL(P_{T|X}\|\nu)$ for a prior $\nu$ (equivalently $\mathrm{CE}+\tfrac1\beta D_{\rm KL}(q(T|X)\|\nu)$).
\begin{lemma}[Variational upper bound]\label{lem:var}
For every decoder $g$ and prior $\nu$, $\mathcal L_\lambda\ge\Jl(P_{T|X})$, with equality iff $g=P_{Y|T}$ and $\nu=P_T$.
\end{lemma}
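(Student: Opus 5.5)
The plan is to split $\mathcal L_\lambda$ into its two terms, show that each dominates its counterpart in $\Jl$ with a nonnegative KL remainder, and add the two inequalities. Throughout, $P_{T|X}$ is fixed, $P_T$ and $P_{Y|T}$ denote the marginal and posterior it induces under the Markov chain $Y-X-T$, and $\lambda>0$. If $\mathrm{CE}$ or $R$ is infinite the inequality is trivial, so we may assume both are finite.

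\emph{Cross-entropy term.} Write $\mathrm{CE}=\E[-\log g(Y|T)]$ and add and subtract $\log P_{Y|T}(Y|T)$. This gives
\begin{equation*}
\mathrm{CE}=H(Y|T)+\E_T\,\KL\bigl(P_{Y|T}(\cdot|T)\,\|\,g(\cdot|T)\bigr)\ge H(Y|T).
\end{equation*}
Because $\cY$ is finite, the conditional law $P_{Y|T}$ exists as a regular conditional distribution and $H(Y|T)\le\log K$ is finite, so the subtraction is legitimate. Equality holds iff $g(\cdot|t)=P_{Y|T}(\cdot|t)$ for $P_T$-a.e.\ $t$.

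\emph{Rate term.} The standard identity for mutual information in terms of a reference measure gives
\begin{equation*}
R=\E_X\KL(P_{T|X}\|\nu)=\I(X;T)+\KL(P_T\|\nu)\ge \I(X;T).
\end{equation*}
To justify this, note that finiteness of $R$ forces $P_{T|X=x}\ll\nu$ for a.e.\ $x$, hence $P_T\ll\nu$. Integrating $\log\frac{dP_{T|X}}{d\nu}=\log\frac{dP_{T|X}}{dP_T}+\log\frac{dP_T}{d\nu}$ against $P_{XT}$ then yields the identity. This is the only step needing care, and I expect it to be the main obstacle: the splitting of the log is well defined only once integrability of each piece is established. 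Both pieces are nonnegative after integration, and their sum $R$ is finite, so each is finite; this is standard on standard Borel spaces. Equality holds iff $\nu=P_T$.

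\emph{Combining.} Adding the two inequalities, with the rate inequality multiplied by $\lambda>0$, gives $\mathcal L_\lambda\ge H(Y|T)+\lambda\I(X;T)=\Jl(P_{T|X})$. Since both remainders are nonnegative, equality holds iff both KL terms vanish, that is, iff $g=P_{Y|T}$ (a.e.) and $\nu=P_T$.
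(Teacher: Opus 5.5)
Your proof is correct and follows the paper's own argument: the paper also writes $\mathrm{CE}=H(Y|T)+\E\,\KL(P_{Y|T}\|g)$ and $R=\I(X;T)+\KL(P_T\|\nu)$ and adds them with weight $\lambda$. Your absolute-continuity and integrability checks, and the $P_T$-a.e.\ qualifier on $g$, are refinements the paper leaves implicit. One caveat: the equality clause needs $\I(X;T)<\infty$, because when $\I(X;T)=\infty$ both sides equal $+\infty$ for every $g,\nu$; so reducing to finite $\mathrm{CE}$ and $R$ is harmless for the inequality but not for the ``only if'' direction, a limitation the paper's statement shares.
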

\begin{proof}
$\E[-\log g(Y|T)]=H(Y|T)+\E\,\KL(P_{Y|T}\|g)$ and $\E_X\KL(P_{T|X}\|\nu)=\I(X;T)+\KL(P_T\|\nu)$.
\end{proof}
For each operating point, the budget $\lambda$ is specified externally and is not optimized by our method. The theoretical results hold for each fixed $\lambda$, and bounds uniform in $\lambda$ provide worst-case guarantees. In the experiments, the operating grid is prespecified and shared by all methods. The separated pipeline trains one compressor--decoder pair for each grid point, whereas each VIB baseline is trained independently at each grid point. No operating point is selected using test data.

\paragraph{Related work.}
\emph{IB theory:} the IB \cite{tishby99}, its Gaussian solution \cite{chechik05}, phase transitions and learnability \cite{wu19}, generalization \cite{shamir10}, and Lagrangian-versus-constraint issues \cite{rodriguez20}. \emph{Variational and conditional IB:} VIB \cite{alemi17} and nonlinear IB \cite{kolchinsky19} train one encoder per $\beta$; rate-conditioned networks \cite{bae23,yang20,perez18} condition the whole network on the rate. FVIB \cite{kudo24} trains a network without $\beta$ and lets $\beta$ enter through a closed-form rescaling, with a guarantee for a Taylor-approximated classification objective; opportunistic IB \cite{binucci24} places Gaussian-IB loadings on a fixed linear map and a pretrained network; $\beta$-free IB \cite{wang26} addresses learning or tuning $\beta$ itself; optimizing the budget is outside our scope, since we treat it as given.
%% ---------------------------------------------------------------------------
\section{Gaussian IB: the factorization that motivates the paper}\label{sec:gaussib}
Let $(X,Y)$ be jointly Gaussian, $X\in\R^{d_x}$, $Y\in\R^{d_y}$, $\E X=0$, $\E Y=0$ (w.l.o.g., by translation invariance of mutual information), $\Sigma_x\succ0$, $\Sigma_{x|y}=\Sigma_x-\Sigma_{xy}\Sigma_y^{-1}\Sigma_{yx}$, with canonical correlations $1>\rho_1\ge\dots\ge\rho_r>0$ (singular values of $\Sigma_x^{-1/2}\Sigma_{xy}\Sigma_y^{-1/2}$, $r\le\min(d_x,d_y)$). The numbers $\kappa_i=1-\rho_i^2$ are the eigenvalues of $M=\Sigma_x^{-1/2}\Sigma_{x|y}\Sigma_x^{-1/2}=Q\,\mathrm{diag}(\kappa)\,Q^\top$ (ascending, padded with $\kappa_i=1$ beyond $r$). The IB optimum over \emph{all} channels is attained by a linear--Gaussian channel \cite{chechik05}, $T=AX+\xi$, $\xi\sim\mathcal N(0,\Id)$ w.l.o.g.,  any $\Sigma_\xi\succ0$ is absorbed into $A$ by an invertible map of $T$. All Gaussian representations are denoted $\Sg$.

\begin{proposition}[Gaussian IB, our notation]\label{prop:gib}
The optimal channel for $\mathcal J_\beta=I(X;T)-\beta I(T;Y)$ is $T=A(\beta)X+\xi$,
\begin{equation}
A(\beta)=D(\beta)\,Q^\top\Sigma_x^{-1/2},\qquad
\alpha_i(\beta)^2=\frac{[\beta\rho_i^2-1]_+}{1-\rho_i^2},\qquad \beta_i^{c}=\frac1{\rho_i^2}.
\label{eq:gib-sol}
\end{equation}
Mode $i$ is active iff $\beta>\beta_i^c$, and $I(X;T)=\tfrac12\sum_i\log(1+\alpha_i(\beta)^2)$. In the normalization of \cite{chechik05}, with $\{\mathbf v_i\}$ left eigenvectors of $\Sigma_{x|y}\Sigma_x^{-1}$, $\beta_i^c=\frac1{1-\kappa_i}$ and $\alpha_i^2=\frac{\beta(1-\kappa_i)-1}{\kappa_i\,r_i}$, $r_i=\mathbf v_i^\top\Sigma_x\mathbf v_i$; our $\mathbf v_i=\Sigma_x^{-1/2}q_i$ has $r_i=1$.
\end{proposition}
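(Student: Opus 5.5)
We take from \cite{chechik05} (as the paper does) that the optimum over all channels is attained by a linear--Gaussian channel $T=AX+\xi$ with $\xi\sim\mathcal N(0,\Id)$. The problem then reduces to minimizing over matrices $A$, and the plan is to whiten, diagonalize, and solve a separable scalar problem in each mode. Both information terms are log-determinants:
\begin{equation*}
I(X;T)=\tfrac12\log\det(\Id+A\Sigma_xA^\top),\qquad I(T;Y)=I(X;T)-\tfrac12\log\det(\Id+A\Sigma_{x|y}A^\top),
\end{equation*}
where the second uses $I(T;Y)=h(T)-h(T|Y)$ and $\mathrm{Cov}(T|Y)=A\Sigma_{x|y}A^\top+\Id$. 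Substituting $B=A\Sigma_x^{1/2}$ gives
\begin{equation*}
\mathcal J_\beta=\tfrac{1-\beta}{2}\log\det(\Id+BB^\top)+\tfrac\beta2\log\det(\Id+BMB^\top).
\end{equation*}

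The key structural step is to show that an optimal $B$ can be taken of the form $B=DQ^\top$ with $D$ diagonal, i.e. that the rows of $B$ align with the eigenvectors of $M$. Two symmetries help. First, $\mathcal J_\beta$ is invariant under $B\mapsto UB$ for orthogonal $U$, since $\det(\Id+UCU^\top)=\det(\Id+C)$. Second, writing $B=UDV^\top$ by SVD, the value depends only on $D$ and $V$. I would fix the singular values $D$ and optimize over $V$. Only the term $\log\det(\Id+DV^\top MVD)$ depends on $V$, and it carries the positive weight $\beta/2$, so it must be minimized over $V$. A majorization or Fiedler/von Neumann-type trace inequality for $\log\det(\Id+D^2\,\cdot)$ should show that the minimizer aligns $V$ with $Q$, pairing the largest $d_j$ with the smallest $\kappa_i$. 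That pairing gives $\sum_i\log(1+d_i^2\kappa_i)$. This alignment argument is the main obstacle; the paper could alternatively defer to \cite{chechik05}'s eigenvector derivation. I would first try to establish it through the stationarity condition and show that stationary points commute with $M$.

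Once the problem is diagonal, it separates into scalar problems in $a=d_i^2\ge0$:
\begin{equation*}
f_i(a)=\tfrac{1-\beta}{2}\log(1+a)+\tfrac\beta2\log(1+\kappa_i a).
\end{equation*}
Setting $f_i'(a)=0$ gives $(1-\beta)(1+\kappa_ia)+\beta\kappa_i(1+a)=0$, hence $a=\frac{\beta(1-\kappa_i)-1}{\kappa_i}=\frac{\beta\rho_i^2-1}{1-\rho_i^2}$. Checking $f_i'(0)=\tfrac12(1-\beta\rho_i^2)$ shows this root is the minimum when $\beta\rho_i^2>1$. Otherwise $f_i$ is nondecreasing on $a\ge0$ and the minimum is at $a=0$. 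This yields the clipped formula for $\alpha_i^2$, the thresholds $\beta_i^c=1/\rho_i^2$, and the activity criterion. Modes with $\kappa_i=1$ (the padding) are never active.

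Finally, $I(X;T)=\tfrac12\log\det(\Id+DQ^\top QD)=\tfrac12\sum_i\log(1+\alpha_i^2)$, and $A=BQ^{\top}$-style bookkeeping gives $A=DQ^\top\Sigma_x^{-1/2}$. For the normalization remark, the rows $\mathbf v_i^\top=q_i^\top\Sigma_x^{-1/2}$ are left eigenvectors of $\Sigma_{x|y}\Sigma_x^{-1}$. This follows from $q_i^\top M=\kappa_iq_i^\top$ by right-multiplying by $\Sigma_x^{-1/2}$, and then $r_i=q_i^\top q_i=1$. Substituting $\kappa_i=1-\rho_i^2$ recovers Chechik et al.'s formula.
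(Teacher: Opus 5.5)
Your proposal is correct and follows the paper's proof essentially step for step: the same log-determinant formulas, whitening by $\Sigma_x^{-1/2}$, the ansatz $\tilde A=DQ^\top$ that splits the objective into the scalar $f_i$, and the same sign analysis of $f_i'$; you also verify the $\mathbf v_i$ normalization, which the paper only asserts, and the final bookkeeping should read $A=B\Sigma_x^{-1/2}$ with $B=DQ^\top$ rather than $A=BQ^\top$. The one real difference is the alignment step: the paper does not prove it but cites Chechik et al.\ for global optimality and checks that $\tilde A=DQ^\top$ satisfies the full-matrix stationarity equation, whereas you plan to prove it, and your plan does close either via Fiedler's inequality $\det(P+R)\ge\prod_i(p_i+r_i)$ for positive semidefinite $P,R$ with eigenvalues sorted in the same order, applied to $\det(D^{-2}+V^\top MV)$ (perturbing $D$ if singular), or via stationarity of the coercive objective, which yields $BM=\Lambda B$ and $BMB^\top=\tfrac{\beta-1}{\beta}BB^\top(\Id+BB^\top/\beta)^{-1}$ and so forces the active right singular vectors of $B$ to be eigenvectors of $M$.
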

\begin{proof}
Global optimality over all channels is \cite{chechik05}. With $T|X\sim\mathcal N(AX,\Id)$ and 
\[
T|Y\sim\mathcal N(A\E[X|Y],A\Sigma_{x|y}A^\top+\Id), 
\]
\(I(X;T)=\tfrac12\log\det(A\Sigma_xA^\top+\Id),
I(T;Y)=\tfrac12\log\det(A\Sigma_xA^\top+\Id)-\tfrac12\log\det(A\Sigma_{x|y}A^\top+\Id),\) 
give
\[
I(X;T)-\beta I(T;Y)=\tfrac12(1-\beta)\log\det(A\Sigma_xA^\top+\Id)+\tfrac{\beta}{2}\log\det(A\Sigma_{x|y}A^\top+\Id).
\]
Writing $A=\Aw\,\Sigma_x^{-1/2}$ and defining $\mathcal L_\beta(\Aw):=I(X;T)-\beta I(T;Y)$ for the channel induced by $A$, we get
\[
\mathcal L_\beta(\Aw)=\tfrac12(1-\beta)\log\det(\Aw\Aw^\top+\Id)+\tfrac\beta2\log\det(\Aw M\Aw^\top+\Id),
\]
and for $\Aw=DQ^\top$ it separates
as $\sum_if_i(a_i)$, $a_i=\alpha_i^2$, $f_i(a)=\tfrac12(1-\beta)\log(1+a)+\tfrac\beta2\log(1+\kappa_ia)$, with
\(
f_i'(a)=\frac{(1-\beta\rho_i^2)+\kappa_i a}{2(1+a)(1+\kappa_ia)}.
\)
If $\beta\rho_i^2\le1$ then $f_i'\ge0$ and $a=0$ is optimal; otherwise $f_i'<0$ for $a<a_i^\star$ and $f_i'>0$ beyond $a_i^\star=(\beta\rho_i^2-1)/\kappa_i$, the unique minimizer. Stationarity of the full matrix problem, 
$(\beta-1)(\Aw\Aw^\top+\Id)^{-1}\Aw=\beta(\Aw M\Aw^\top+\Id)^{-1}\Aw M$, row $i$ at $\Aw=DQ^\top$,
gives the same $a_i^\star$.
\end{proof}

\paragraph{The factorization.}
Let $W:=Q^{\top}\Sigma_x^{-1/2}$ and $\Sg:=WX$. It depends only on second moments, with $W\Sigma_xW^{\top}=\Id$ and $W\Sigma_{x|y}W^{\top}=\Id-\mathrm{diag}(\rho_i^2)$. Then
\(
A(\beta)=D(\beta)\,W,\, T_\beta=D(\beta)\,WX+\;\xi,\, \xi\sim\mathcal N(0,\Id).
\)
$W$ is a budget-free linear map playing the role of $F$; $D(\beta)$ is diagonal with at most $r$ nonzero entries, each a closed-form function of one $\rho_i$ and of $\beta$, playing the role of $C_\beta$. In the coordinates $\Sg$ the problem decouples into scalar channels $T_i=\alpha_i(\beta)(\Sg)_i+\xi_i$.

\begin{proposition}[Sufficient linear representations]\label{prop:gausssuff}
Let $\Sg=LX$ with $L\in\R^{m\times d_x}$ of full row rank and $B_{yx}=\Sigma_{yx}\Sigma_x^{-1}$. Then $\epsilon_F\triangleq I(X;Y|LX)=0$ 
iff $B_{yx}=\Omega L$ for some matrix $\Omega$ (the row space of $L$ contains that of $B_{yx}$), and in general
\[
\epsilon_F=I(X;Y)-I(LX;Y)=\tfrac12\sum_i\big[\log(1-\rho_i(L)^2)-\log(1-\rho_i^2)\big]\ \ge0,
\]
where $\rho_i(L)\le\rho_i$ are the canonical correlations of $(LX,Y)$ (padded with zeros). 
\end{proposition}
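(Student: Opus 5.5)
The plan is to use the chain rule for mutual information first and then compute Gaussian log-determinants. Since $LX$ is a function of $X$, $I(X;Y)=I(X,LX;Y)=I(LX;Y)+I(X;Y\mid LX)$. This gives $\epsilon_F=I(X;Y)-I(LX;Y)\ge0$ immediately. It remains to evaluate both informations through canonical correlations. For jointly Gaussian $(Z,Y)$ with $\Sigma_z\succ0$, we have $I(Z;Y)=\tfrac12\log\det\Sigma_z-\tfrac12\log\det\Sigma_{z|y}=-\tfrac12\log\det(\Sigma_z^{-1/2}\Sigma_{z|y}\Sigma_z^{-1/2})$. The matrix inside is $\Id-KK^\top$ with $K=\Sigma_z^{-1/2}\Sigma_{zy}\Sigma_y^{-1/2}$, so its eigenvalues are $1-\rho_i(Z)^2$ and $I(Z;Y)=-\tfrac12\sum_i\log(1-\rho_i(Z)^2)$.

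Applying this with $Z=X$ and $Z=LX$ yields the displayed formula. Here $\Sigma_{LX}=L\Sigma_xL^\top\succ0$ because $L$ has full row rank, and we pad with zeros so that both sums run over the same index set. For the inequality $\rho_i(L)\le\rho_i$, I would use the interlacing/Courant--Fischer characterization of canonical correlations as successive maxima of $\mathrm{corr}(a^\top Z,b^\top Y)$. Linear functionals of $LX$ form a subspace of the linear functionals of $X$, so each successive max-min value can only decrease. Equivalently, $K_L K_L^\top$ is a compression of $K_XK_X^\top$ to the subspace $\Sigma_x^{1/2}L^\top$, and Cauchy interlacing gives the ordered eigenvalue inequalities. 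This is also consistent with $\epsilon_F\ge0$ termwise.

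For the sufficiency characterization, Gaussianity gives $Y\mid X\sim\mathcal N(B_{yx}X,\Sigma_{y|x})$. Conditional independence $Y\perp X\mid LX$ holds iff $\E[Y\mid X]$ is $\sigma(LX)$-measurable; the conditional covariance is constant, so the mean is all that matters. If $B_{yx}=\Omega L$, then $\E[Y|X]=\Omega LX$, so the law of $Y$ given $X$ depends only on $LX$ and $\epsilon_F=0$. Conversely, if $\epsilon_F=0$, then $\E[Y|X]=\E[Y|LX]=\Sigma_{y,LX}(L\Sigma_xL^\top)^{-1}LX$ almost surely. Since $\Sigma_x\succ0$, two linear maps that agree a.s. on $X$ are equal, so $B_{yx}=\Omega L$ with $\Omega=\Sigma_{yx}L^\top(L\Sigma_xL^\top)^{-1}$.

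The main obstacle is the conditioning step. It requires that conditional independence for Gaussians reduces to the regression coefficient condition, and that $I(X;Y|LX)$ is well defined when $X$ given $LX$ is degenerate. I would handle the latter by using the chain-rule identity as the working definition, which is valid in the standard Borel setting. The former follows because $P_{Y|X}$ is then a function of $LX$, so the Markov chain $Y-LX-X$ holds.
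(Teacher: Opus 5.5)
Your proposal is correct and follows essentially the same route as the paper: you reduce sufficiency to the Gaussian regression mean $B_{yx}X$ being a function of $LX$, and you evaluate both mutual informations with the canonical-correlation log-determinant formula. Your converse uses $\E[Y|LX]=\Sigma_{yx}L^\top(L\Sigma_xL^\top)^{-1}LX$ together with the nondegeneracy of $X$, which gives an explicit $\Omega$, and your Cauchy interlacing argument justifies $\rho_i(L)\le\rho_i$; both are more explicit than the paper's affine-fiber argument and its one-line monotonicity claim, but they do not change the approach.
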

\begin{proof}
$Y|X\sim\mathcal N(B_{yx}X,\Sigma_{y|x})$ with $\Sigma_{y|x}=\Sigma_y-\Sigma_{yx}\Sigma_x^{-1}\Sigma_{xy}$ fixed. $Y-LX-X$ forces the law of $Y$ given $X$ to be a function of $LX$, i.e.\ $B_{yx}X$ is a function of $LX$ a.s.; since $X|LX$ has full support on an affine fiber, $B_{yx}$ vanishes on $\ker L$, i.e.\ $B_{yx}=\Omega L$. Conversely $B_{yx}=\Omega L$ makes the conditional law depend on $X$ only through $LX$. The closed form is $\I(X;Y)=-\tfrac12\sum_i\log(1-\rho_i^2)$ \cite{cover06} applied to $(X,Y)$ and $(LX,Y)$; $\rho_i(L)\le\rho_i$ because canonical correlations of a linear image cannot increase.
\end{proof}

%% ---------------------------------------------------------------------------
\section{Separation theory: the sufficiency deficit}\label{sec:problem}
\begin{definition}[Separated-representation scheme]\label{def:scheme}
A scheme is a pair $(F,\cC)$: a measurable budget-free representation $F:\cX\to\Ub$, $U=F(X)$, and a class $\cC_\lambda$ of channels $P_{T|U}$ (compressors). For each $\lambda$ the scheme selects $C_\lambda\in\cC$ and transmits $T\sim C_\lambda(\cdot|U)$; $\cC$ may be one budget-conditioned network or one network per budget.
Since $T$ depends on $X$ only through $U$, $Y-X-U-T$. Define
\begin{equation}
\JF(\lambda)=\inf_{P_{T|U}}\Jl(P_{T|U}\circ F),\qquad
J^{F,\cC}(\lambda)=\inf_{C\in\cC}\Jl(C\circ F).
\end{equation}
where
\(
P_{T|U}\circ F
=P_{T|U}(t|F(x)).
\)
\end{definition}
\begin{definition}[Sufficiency deficit]\label{def:eps}
$\epsF=\I(X;Y|F(X))=\I(X;Y)-\I(F(X);Y)=H(Y|F(X))-H(Y|X)\ \ge0$. $F$ is \emph{sufficient} if $\epsF=0$, i.e.\ $Y-F(X)-X$.
\end{definition}
\begin{definition}[Separation gap]\label{def:gap}
$\Gap_{F,\cC}(\lambda)=J^{F,\cC}(\lambda)-\Js(\lambda)$ and $\Gap_F(\lambda)=\JF(\lambda)-\Js(\lambda)$. $F$ is $(\delta,\cC)$-separable on a budget set $\mathcal B$ if $\sup_{\lambda\in\mathcal B}\Gap_{F,\cC}(\lambda)\le\delta$.
\end{definition}

\begin{theorem}[Gaussian separation is exact]\label{thm:gauss}
Let $(X,Y)$ be jointly Gaussian, $\Sg=LX$ with $L$ invertible, $\Sigma_S=\mathrm{Cov}(\Sg)$, $\tilde S=\Sigma_S^{-1/2}\Sg$, and $Q_S$ the orthogonal matrix of canonical directions of $(\tilde S,Y)$. For $\beta>1$ the compressor $T_\beta=D(\beta)Q_S^{\!\top}\tilde S+\xi$, $\xi\sim\mathcal N(0,\Id)$, with $D(\beta)=\mathrm{diag}(\alpha_1,\dots,\alpha_r,0,\dots)$ as in \eqref{eq:gib-sol}, attains $\Js(\lambda)$ at $\lambda=1/\beta$. Take $F(X)=\Sg=LX$ and $U=F(X)$, and let $\cC_{\rm diag}$ be the class of channels
\[
T=D\,Q_S^{\!\top}\Sigma_S^{-1/2}U+\xi,\qquad D\in\R^{d_x\times d_x}\text{ diagonal (free for each }\lambda),\ \ \xi\sim\mathcal N(0,\Id)\text{ independent of }U.
\]
Hence $\Delta_{F,\cC_{\rm diag}}(\lambda)=0$, for every $\lambda>0$ (for $\lambda<1$ by the construction above, for $\lambda\ge1$ by Proposition~\ref{prop:range}), with $r$ budget-dependent scalars; $D(\beta)$ is an invariant of $(X,Y)$, independent of $L$, and mode $i$ switches on at $\beta_i^c=1/\rho_i^2$.
\end{theorem}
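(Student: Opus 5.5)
The plan is to reduce the statement to Proposition~\ref{prop:gib} by showing that the compressor in $\cC_{\rm diag}$, applied to $U=LX$, reproduces exactly the optimal Gaussian channel $T=A(\beta)X+\xi$ (up to an orthogonal relabeling of coordinates, which leaves $I(X;T)$ and $I(T;Y)$ unchanged).

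\textbf{Step 1 (invariance of the whitened canonical problem).} Since $L$ is invertible, $\Sigma_S=L\Sigma_xL^\top\succ0$. The whitened variable $\tilde S=\Sigma_S^{-1/2}LX$ satisfies $\tilde S=R\,\Sigma_x^{-1/2}X$ with $R:=\Sigma_S^{-1/2}L\Sigma_x^{1/2}$. From $RR^\top=\Sigma_S^{-1/2}L\Sigma_xL^\top\Sigma_S^{-1/2}=\Id$, $R$ is orthogonal.

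\textbf{Step 2 (canonical structure is preserved).} The cross-covariance matrix of $(\tilde S,Y)$ is $R\,\Sigma_x^{-1/2}\Sigma_{xy}\Sigma_y^{-1/2}$. Because $R$ is orthogonal, its singular values are the same $\rho_i$. Equivalently, $M_S:=\mathrm{Cov}(\tilde S\mid Y)=RMR^\top$, so we may take $Q_S=RQ$, with the same ordering and the same $\kappa_i$. If a $\rho_i$ is repeated, $Q$ is not unique, but any choice of $Q_S$ yields some valid $Q=R^\top Q_S$ diagonalizing $M$, and that suffices.

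This step also gives the invariance claim: $D(\beta)$ depends only on the $\rho_i$, which are independent of $L$. The switching threshold $\beta_i^c=1/\rho_i^2$ is then read directly from Proposition~\ref{prop:gib}.

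\textbf{Step 3 (identification with the optimum).} Substituting gives
\[
Q_S^\top\Sigma_S^{-1/2}U=Q^\top R^\top R\,\Sigma_x^{-1/2}X=Q^\top\Sigma_x^{-1/2}X=WX .
\]
Hence $T_\beta=D(\beta)WX+\xi=A(\beta)X+\xi$, which is the global optimizer of $\mathcal J_\beta$ by Proposition~\ref{prop:gib}.

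To pass to $\Jl$, note that $\Jl=\lambda\,\mathcal J_\beta$ with $\lambda=1/\beta$ in the continuous-$Y$ convention. Multiplying by $\lambda>0$ preserves minimizers, so $T_\beta$ attains $\Js(\lambda)$. Since this channel lies in $\cC_{\rm diag}$ (with $D=D(\beta)$) and factors through $U$, we get
\[
\Js(\lambda)\le\JF(\lambda)\le J^{F,\cC_{\rm diag}}(\lambda)\le\Js(\lambda),
\]
so $\Gap_{F,\cC_{\rm diag}}(\lambda)=0$ for all $\lambda<1$. Only the $r$ entries $\alpha_1,\dots,\alpha_r$ vary with $\beta$; the rest are zero.

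\textbf{Step 4 (the range $\lambda\ge1$).} Here $\beta\le1<1/\rho_i^2$, so every $\alpha_i=0$. The choice $D=0$ lies in $\cC_{\rm diag}$ and gives $T=\xi$ independent of $X$, with $\Jl=0$. Proposition~\ref{prop:range}, as cited in the statement, supplies $\Js(\lambda)=0$ in this range. Alternatively, this follows from $I(T;Y)\le I(X;T)$ by data processing, so $\Jl\ge(\lambda-1)I(X;T)\ge0$.

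The main obstacle is Step 2: handling non-uniqueness of the canonical directions when $\rho_i$ are repeated or zero (the padded $\kappa_i=1$ block), and checking that the orthogonal relabeling by $R$ is consistent with the ascending ordering used in $D(\beta)$. I would handle it by arguing at the level of eigenspaces of $M_S=RMR^\top$. Within an eigenspace $\alpha_i$ is constant, so any orthonormal basis of that eigenspace gives the same objective value in the separable form $\sum_i f_i(a_i)$ from the proof of Proposition~\ref{prop:gib}.
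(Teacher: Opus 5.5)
Your proof is correct and is essentially the paper's argument. Both reduce the claim to Proposition~\ref{prop:gib}, using that the invertible map $L$ preserves the canonical correlations, so $D(\beta)$ and the thresholds $1/\rho_i^2$ are unchanged.

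The difference is only in which coordinates you work in. The paper applies Proposition~\ref{prop:gib} directly to the pair $(\Sg,Y)$, noting that channels from $\Sg$ and from $X$ coincide. You instead transport back to $X$ via the orthogonal matrix $R=\Sigma_S^{-1/2}L\Sigma_x^{1/2}$ and verify $T_\beta=A(\beta)X+\xi$ literally. This has the side benefit of making three things explicit that the paper leaves implicit: the non-uniqueness of $Q_S$, the sandwich $\Js\le\JF\le J^{F,\cC_{\rm diag}}\le\Js$, and the $\lambda\ge1$ case.
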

\begin{proof}
$\Sg$ is a bijection of $X$, so channels from $\Sg$ and from $X$ coincide and $\Js$ is the IB value of $(\Sg,Y)$. This pair is jointly Gaussian with the same canonical correlations (invariant under invertible linear maps), so Proposition~\ref{prop:gib} applies in the canonical coordinates $Q_S^\top\tilde S$. For one mode with $\kappa=1-\rho^2$, $a=\alpha^2$: $\I(X;T)=\tfrac12\log(1+a)$, $\I(T;Y)=\tfrac12\log(1+a)-\tfrac12\log(1+a\kappa)$, and stationarity of $\I(X;T)-\beta\I(T;Y)$ in $a$ reads $(1-\beta)(1+a\kappa)+\beta\kappa(1+a)=0$, i.e.\ $a=(\beta\rho^2-1)/(1-\rho^2)$ if $\beta\rho^2>1$; otherwise $a=0$ (the derivative at $0$ is $\tfrac12(1-\beta\rho^2)\ge0$).
\end{proof}

Adding and subtracting $\JF$ gives the exact decomposition
\begin{equation}
\Gap_{F,\cC}(\lambda)=\Gap_F(\lambda)+J^{F,\cC}(\lambda)-\JF(\lambda).
\label{eq:decomp}
\end{equation}
where \(\Gap_F(\lambda)\) is the representation sufficiency and \(J^{F,\cC}(\lambda)-\JF(\lambda)\) is the compressor expressiveness, also both \(\ge 0\). 
The first term is an information-theoretic property of $F$ and the pair $(X,Y)$; the second measures whether a practical class $\cC$ can realize the optimum on $U$. Only the first is characterized by $\epsF$.

\begin{lemma}[Reduction]\label{lem:red}
Let $U=F(X)$ and let $T$ be generated from $U$ through
a channel $P_{T|U}$. Since $X-U-T$ is a Markov chain and
$U$ is a deterministic function of $X$,
\(
I(X;T)=I(U;T).
\)
Consequently,
\(
J^F(\lambda)
=\inf_{P_{T|U}}
\big[H(Y|T)+\lambda I(U;T)\big]
\)
is the IB value of the pair $(U,Y)$. Since this optimization
restricts the admissible channels relative to the unrestricted
IB problem, $J^F(\lambda)\ge J^\star(\lambda)$.
Moreover, $Y-U-T$ implies $H(Y|T)\ge H(Y|U)$.
Thus, for every $\lambda\ge0$,
\[
J^F(\lambda)\ge H(Y|U)
=H(Y|X)+\epsilon_F.
\]
\end{lemma}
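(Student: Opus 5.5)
The plan is to prove the four claims of Lemma~\ref{lem:red} in order, using only the chain $Y-X-U-T$ from Definition~\ref{def:scheme} and the data-processing inequality.

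\emph{Step 1: $I(X;T)=I(U;T)$.} Since $U=F(X)$ is a measurable function of $X$, the pair $(X,U)$ is in bijection with $X$, so $I(X;T)=I(X,U;T)$. The chain rule gives
\[
I(X,U;T)=I(U;T)+I(X;T\mid U).
\]
Because $T$ is generated from $U$ alone, $X-U-T$ holds, so $I(X;T\mid U)=0$. Hence $I(X;T)=I(U;T)$. On a standard Borel space the chain rule and the vanishing of conditional mutual information under Markovity hold in the general measure-theoretic sense. The identity also holds trivially if both sides are $+\infty$, since the argument never subtracts infinite quantities.

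\emph{Step 2: $J^F$ is the IB value of $(U,Y)$.} Substitute Step~1 into $\Jl(P_{T|U}\circ F)=H(Y|T)+\lambda I(X;T)$. The joint law of $(Y,T)$ depends only on $P_{YU}$ and $P_{T|U}$, because $Y-U-T$ makes $P_{YUT}=P_{YU}P_{T|U}$; in particular $H(Y|T)$ depends only on these two laws. Taking the infimum over $P_{T|U}$ therefore gives exactly the IB Lagrangian of the pair $(U,Y)$.

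\emph{Step 3: $J^F(\lambda)\ge\Js(\lambda)$.} Every composition $P_{T|U}\circ F$ is itself a channel $P_{T|X}$ satisfying $Y-X-T$. The infimum defining $J^F$ therefore runs over a subset of the channels in $\Js$, which gives the inequality.

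\emph{Step 4: the lower bound $H(Y|U)$.} By data processing on $Y-U-T$, $I(Y;T)\le I(Y;U)$, so $H(Y|T)\ge H(Y|U)$; this is valid since $Y$ is finite and all entropies are finite. With $\lambda I(U;T)\ge0$, every feasible value is at least $H(Y|U)$, and so is the infimum. Finally, Definition~\ref{def:eps} gives $H(Y|U)=H(Y|X)+\epsF$.

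The only delicate point is Step~1 in general alphabets, specifically the identity $I(X;T)=I(X,U;T)$ for a deterministic function. I would handle it by noting that $\sigma(X,U)=\sigma(X)$, so the two mutual informations are computed from identical joint laws. Everything else is routine.
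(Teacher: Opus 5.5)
Your proposal is correct and follows the paper's own argument. The paper uses the same chain-rule identity $I(X;T)=I(U;T)+I(X;T\mid U)$ with $I(X;T\mid U)=0$ from $X-U-T$, the same observation that compressors on $U$ are a subset of the channels from $X$, and the same data-processing step $I(T;Y)\le I(U;Y)$ on $Y-U-T$, combined with Definition~\ref{def:eps}. Your remarks on $\sigma(X,U)=\sigma(X)$ and on the finiteness of $H(Y)$ only make explicit measure-theoretic details that the paper leaves implicit.
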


\begin{proof}
$X-U-T$ is Markov and $U$ is a function of $X$, so $\I(X;T)=\I(U;T)+\I(X;T|U)=\I(U;T)$. Compressors on $U$ are channels from $X$, so $\JF\ge\Js$. Finally $\I(T;Y)\le\I(U;Y)$ gives $H(Y|T)\ge H(Y|U)$.
\end{proof}

For a channel $P_{T|X}$, define its $U$-averaged
channel by
\(
P_{T'|U}(\cdot|u)
\triangleq
\mathbb E[P_{T|X}(\cdot|X)\mid U=u],
\, U=F(X).
\)
Generate $T'$ from $U$ using this channel, so that
$Y-U-T'$ forms a Markov chain.

Define
\[
\varphi(\tau)\triangleq
\begin{cases}
\tau\log(K-1)+h_{\rm b}(\tau),
& \tau\le 1-1/K,\\
\log K,&\text{otherwise},
\end{cases}
\]
and
\(
\Phi(\epsilon)\triangleq
\varphi\big(\sqrt{\epsilon/2}\big).
\)
Moreover, $\varphi$ is concave and nondecreasing, $\Phi(0)=0$, and $|H(P)-H(Q)|\le\varphi(\TV(P,Q))$ for distributions on $\cY$ \cite{zhang07,audenaert07}, where $\TV$ is total variation distance and $\hb(t)=-t\log t-(1-t)\log(1-t)$ is binary entropy.

\begin{lemma}[Coupling]\label{lem:coupling}
Let $Y-X-T$ and $T'$ the $U$-averaged output. Then (i) $(U,T')\overset{d}{=}(U,T)$; (ii) $\I(U;T')=\I(U;T)\le\I(X;T)$; (iii) $\I(T;Y|U)\le\epsF$ and $\I(T;Y)-\I(T';Y)\le\Phi(\epsF)$. If $\epsF=0$ then $(T',Y)\overset{d}{=}(T,Y)$.
\end{lemma}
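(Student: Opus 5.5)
The plan is to make everything go through the joint law of $(X,U,T,Y)$ under the original channel and compare it with the law of $(U,T',Y)$ under the $U$-averaged channel. Under the original channel $P(x,t,y)=P_X(x)P_{T|X}(t|x)P_{Y|X}(y|x)$ with $U=F(X)$. Under the averaged channel $T'$ is drawn from $P_{T'|U}$ independently of $(X,Y)$ given $U$.

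For (i), integrate out $X$ given $U=u$. The conditional law of $T$ given $U=u$ is $\E[P_{T|X}(\cdot|X)\mid U=u]=P_{T'|U}(\cdot|u)$, so $(U,T)\overset{d}{=}(U,T')$. Then (ii) is immediate: $\I(U;T')=\I(U;T)$ because mutual information depends only on the joint law. Also $\I(U;T)\le\I(X;T)$ by data processing, since $U$ is a function of $X$ (equivalently $U-X-T$).

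For the first part of (iii), apply the chain rule twice to $\I(T;X,Y\mid U)$. Since $T-X-Y$ holds given $U$ (because $U$ is a function of $X$),
\[
\I(T;Y\mid U)\le \I(T;X,Y\mid U)=\I(T;X\mid U)+\I(T;Y\mid X,U)=\I(T;X\mid U).
\]
This is not yet the desired bound. The better route is to condition on $T$:
\[
\I(T;Y\mid U)\le \I(X,T;Y\mid U)=\I(X;Y\mid U)+\I(T;Y\mid X,U)=\epsF+0,
\]
using $Y-X-T$ and $U=F(X)$.

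For the second part, note that $T'$ satisfies $Y-U-T'$, and that $(U,T')\overset{d}{=}(U,T)$. Hence the joint law of $(T',Y)$ equals $\sum_u P_{T|U}(t|u)P_{Y|U}(y|u)P_U(u)$; this is the law of $(T,Y)$ with the conditional dependence of $Y$ and $T$ given $U$ removed. Write
\[
\I(T;Y)-\I(T';Y)=H(Y|T')-H(Y|T)=\E_T\big[H(P_{Y|T'=t})-H(P_{Y|T=t})\big],
\]
using $H(Y)$ common and $T\overset{d}{=}T'$. For each $t$, the entropy-continuity bound gives $|H(P_{Y|T=t})-H(P_{Y|T'=t})|\le\varphi(\TV_t)$, where $\TV_t=\TV(P_{Y|T=t},P_{Y|T'=t})$. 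Since $\varphi$ is concave and nondecreasing, Jensen gives a bound of $\varphi(\E_T\TV_t)$.

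The main obstacle is bounding $\E_T\TV_t$ by $\sqrt{\epsF/2}$. Note that $P_{Y|T'=t}=\E[P_{Y|U}\mid T=t]$ (mixture over $U$ given $T=t$, whose law matches), while $P_{Y|T=t}=\E[P_{Y|U,T=t}\mid T=t]$. Convexity of TV gives
\[
\TV_t\le \E[\TV(P_{Y|U,T},P_{Y|U})\mid T=t].
\]
Pinsker then bounds each term by $\sqrt{\tfrac12\KL(P_{Y|U,T}\|P_{Y|U})}$. Averaging, and applying Jensen to the square root, gives
\[
\E_T\TV_t\le\sqrt{\tfrac12\,\I(T;Y|U)}\le\sqrt{\epsF/2},
\]
using the first part of (iii). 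Monotonicity of $\varphi$ then yields $\Phi(\epsF)$. The case beyond $1-1/K$ is harmless because there $\varphi=\log K$ dominates any entropy difference.

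Finally, if $\epsF=0$ then $\I(T;Y|U)=0$, so $P_{Y|U,T}=P_{Y|U}$ almost surely. The joint laws of $(U,T,Y)$ and $(U,T',Y)$ therefore coincide, and hence $(T',Y)\overset{d}{=}(T,Y)$.
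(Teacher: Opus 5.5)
Your proposal is correct and follows the paper's proof essentially step for step: the same chain-rule bound $\I(T;Y|U)\le\I(X,T;Y|U)=\epsF$, the same comparison of $P_{UTY}$ with $P_{UT}P_{Y|U}$, and the same entropy-continuity, Jensen and Pinsker chain. The only variation is how you bound $\E_t\TV_t$. You apply Pinsker conditionally on $(U,T)$ and then remove $U$ by convexity of total variation. The paper instead removes $U$ first by data processing for KL, $\KL(P_{TY}\|Q_{TY})\le\KL(P_{UTY}\|Q_{UTY})=\I(T;Y|U)$, and then applies Pinsker per $t$; both routes give $\sqrt{\epsF/2}$.
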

\begin{proof}
(i) is the definition; (ii) follows from (i) and data processing. (iii) $Y-X-T$ and $U=F(X)$ give $\I(Y;X,T|U)=\I(Y;X|U)+\I(Y;T|X,U)=\epsF$, and $\I(Y;X,T|U)\ge\I(Y;T|U)$. Let $P=P_{UTY}$ and $Q=P_{UT}P_{Y|U}$ (under $Q$, $(T,Y)$ has the law of $(T',Y)$). Then $P_T=Q_T$, $P_Y=Q_Y$ and $\KL(P_{UTY}\|Q_{UTY})=\I_P(T;Y|U)\le\epsF$. By data processing $\E_t\KL(P_{Y|t}\|Q_{Y|t})=\KL(P_{TY}\|Q_{TY})\le\epsF$, so by Pinsker and Jensen $\E_t\TV(P_{Y|t},Q_{Y|t})\le\sqrt{\epsF/2}$. With equal $T$-marginals, $\I_P(T;Y)-\I_Q(T;Y)=\E_t[H(Q_{Y|t})-H(P_{Y|t})]\le\E_t\varphi(\TV_t)\le\varphi(\E_t\TV_t)\le\Phi(\epsF)$. If $\epsF=0$ then $P=Q$.
\end{proof}

\subsection{Exactness, stability and necessity}
\begin{theorem}[Exact separation under sufficiency]\label{thm:exact}
If $\epsF=0$ then $\JF(\lambda)=\Js(\lambda)$ for every $\lambda\ge0$ (arbitrary $\cY$, with $\Jl=-\I(T;Y)+\lambda\I(X;T)$): the $\beta$-free representation loses nothing and all $\beta$-dependence can be placed in a compressor acting on $U$.
\end{theorem}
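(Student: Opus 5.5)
We prove the two inequalities $\JF(\lambda)\ge\Js(\lambda)$ and $\JF(\lambda)\le\Js(\lambda)$ separately. The first is already contained in Lemma~\ref{lem:red}: every compressor $P_{T|U}$ composed with $F$ is a channel from $X$, so the infimum defining $\JF$ runs over a subset of the channels in $\Js$. Because this step never uses the entropy of $Y$, it holds verbatim for arbitrary $\cY$ with $\Jl=-\I(T;Y)+\lambda\I(X;T)$.

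For the reverse inequality, fix $\lambda\ge0$ and an arbitrary channel $P_{T|X}$ with finite objective, and replace it by its $U$-averaged channel $P_{T'|U}$. We then show
\[
\Jl(P_{T'|U}\circ F)\le \Jl(P_{T|X}).
\]
The rate term is handled by Lemma~\ref{lem:coupling}(ii) together with Lemma~\ref{lem:red}:
\[
\I(X;T')=\I(U;T')=\I(U;T)\le\I(X;T).
\]
For the relevance term, the final clause of Lemma~\ref{lem:coupling} gives $(T',Y)\overset{d}{=}(T,Y)$ when $\epsF=0$, hence $\I(T';Y)=\I(T;Y)$. Taking the infimum over $P_{T|X}$ then yields $\JF\le\Js$.

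The main care point is that Lemma~\ref{lem:coupling} was stated in a section assuming finite $\cY$. We therefore re-check its equal-law clause for general (standard Borel) $\cY$, which requires no entropy bounds. Sufficiency means $Y-U-X$, i.e.\ $P_{Y|X}=P_{Y|U}$ almost surely. Combined with $Y-X-T$, this gives, for measurable sets,
\[
P(T\in A,\,Y\in B,\,U\in E)=\E\big[\mathbf 1_E(U)\,P_{T|X}(A|X)\,P_{Y|U}(B|U)\big].
\]
Conditioning on $U$ inside the expectation replaces $P_{T|X}(A|X)$ by $P_{T'|U}(A|U)$, so $(U,T,Y)\overset{d}{=}(U,T',Y)$. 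Hence the joint laws of $(T,Y)$ and $(T',Y)$ coincide, and so do their mutual informations, possibly both infinite.

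To avoid $\infty-\infty$ issues, we restrict to channels with $\I(X;T)<\infty$ and note that channels with infinite rate do not affect the infimum for $\lambda>0$. For $\lambda=0$, equality of the relevance terms applies directly. The existence of a regular version of $P_{T'|U}$ on a standard Borel space is routine, and the conditional-expectation manipulation above is the only genuinely new verification.
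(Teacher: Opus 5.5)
Your proposal is correct and follows essentially the paper's proof: replace an arbitrary $P_{T|X}$ by its $U$-averaged channel, use $(T',Y)\overset{d}{=}(T,Y)$ under $\epsF=0$ together with $\I(X;T')=\I(U;T')=\I(U;T)\le\I(X;T)$, take the infimum, and get the reverse inequality $\JF\ge\Js$ from Lemma~\ref{lem:red}. Your direct check of the equal-law clause for general $\cY$ and your treatment of infinite-rate channels are extra care that the paper's one-line proof omits, not a different route.
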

\begin{proof}
For any $P_{T|X}$ form $T'$ as in Lemma~\ref{lem:coupling}: $(T',Y)\overset d=(T,Y)$ and $\I(U;T')\le\I(X;T)$, so $\Jl(T'\circ F)\le\Jl(P_{T|X})$. Take the infimum.
\end{proof}

\begin{proposition}[Separation in the IB iteration]
Let $\cX,\cY$ and the alphabet of $T$ be finite, $p_x=P_{Y|X=x}$ with $p(x)>0$, and consider the
self-consistent IB iteration
\(
q(t|x)\propto q(t)\exp\{-\KL(p_x\|q_{Y|t})/\lambda\},\,
\)
with
\(
q(t)=\sum_xp(x)q(t|x),\,
\)
and
\(
q(y|t)=\sum_xp_x(y)\,q(x|t),
\)
with $q(x|t)=p(x)q(t|x)/q(t)$, started from any $q(t|x)$ with $q(t)>0$ and
$q(y|t)>0$. Let $\pi_u=P_{Y|U=u}$ for $U=F(X)$. For every $x,t$,
\(
q(t|x)\propto q(t)\exp\{-[\KL(\pi_{F(x)}\|q_{Y|t})+c(x,t)]/\lambda\},\,
c(x,t)=\sum_y\big(p_x-\pi_{F(x)}\big)(y)\log\frac{\pi_{F(x)}(y)}{q(y|t)},
\)
with $\E[c(X,t)\mid U]=0$. If $\epsF=0$, then $c\equiv0$ and, after the first
update, $q(t|x)$ depends on $x$ only through $F(x)$; from then on the iteration
coincides with the same iteration run on $(U,Y)$.
\end{proposition}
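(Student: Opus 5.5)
The argument is a direct algebraic rewriting of the KL divergence in the update, followed by an induction over iterations.

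\emph{Step 1: the decomposition.} Fix $x,t$ and write $u=F(x)$. Adding and subtracting $\log\pi_u(y)$ inside the logarithm gives
\[
\KL(p_x\|q_{Y|t})=\sum_y p_x(y)\log\frac{p_x(y)}{\pi_u(y)}+\sum_y p_x(y)\log\frac{\pi_u(y)}{q(y|t)}.
\]
Replacing $p_x$ by $\pi_u+(p_x-\pi_u)$ in the second sum yields $\KL(\pi_u\|q_{Y|t})+c(x,t)$.

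The first sum, $\KL(p_x\|\pi_u)$, does not depend on $t$. It therefore cancels in the normalization over $t$, which gives the displayed form of $q(t|x)$. All terms are finite because $q(y|t)>0$ by assumption. Also $\pi_u(y)=0$ forces $p_x(y)=0$ for every $x$ in the fiber of $u$, since $p(x)>0$; with the convention $0\log 0=0$, the $\log\pi_u$ terms are harmless.

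\emph{Step 2: zero conditional mean.} For fixed $t$, the factor $\log(\pi_u(y)/q(y|t))$ depends on $x$ only through $u$. Hence
\[
\E[c(X,t)\mid U=u]=\sum_y\big(\E[p_X(y)\mid U=u]-\pi_u(y)\big)\log\frac{\pi_u(y)}{q(y|t)},
\]
and this vanishes because $\E[p_X\mid U=u]=\pi_u$ by the tower property.

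\emph{Step 3: the sufficient case.} If $\epsF=0$, then $Y-F(X)-X$, so $p_x=\pi_{F(x)}$ for every $x$ with $p(x)>0$. Here finiteness of the alphabets and $p(x)>0$ upgrade the a.s. statement to a pointwise one. Thus $c\equiv0$. After one update,
\[
q(t|x)\propto q(t)\exp\{-\KL(\pi_{F(x)}\|q_{Y|t})/\lambda\},
\]
which depends on $x$ only through $F(x)$. Denote it $\tilde q(t|u)$.

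\emph{Step 4: coincidence with the iteration on $(U,Y)$.} Suppose that at some iterate $q(t|x)=\tilde q(t|F(x))$. We check that all derived quantities match those of the IB iteration on $(U,Y)$ with $P_{Y|U=u}=\pi_u$.

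\emph{Marginal.} We have $q(t)=\sum_u p(u)\tilde q(t|u)$.

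\emph{Posterior.} We have $q(x|t)=p(x)\tilde q(t|u)/q(t)$, so $q(u|t)=p(u)\tilde q(t|u)/q(t)$.

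\emph{Decoder.} Grouping $x$ by its fiber,
\[
q(y|t)=\sum_u q(u|t)\sum_{x\in F^{-1}(u)}\frac{p(x)}{p(u)}\,p_x(y)=\sum_u q(u|t)\,\pi_u(y).
\]
This is exactly the decoder of the $(U,Y)$ iteration.

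The next update then equals the $(U,Y)$ update composed with $F$, which closes the induction. Positivity of $q(t)$ and $q(y|t)$ is preserved, since the exponential weights are strictly positive.

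\emph{Main obstacle.} The only delicate point is bookkeeping rather than analysis. One must handle zeros of $\pi_u$, which requires the $0\log0$ convention in $c$. One must also note that the claim "from then on" requires matching the $(U,Y)$ iteration from the state produced after the first update, not from the original initialization, since the initial $q(t|x)$ need not factor through $F$.
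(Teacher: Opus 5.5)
Your proposal is correct and follows essentially the same route as the paper. Both insert $\pi_{F(x)}$ into the KL so that the $t$-independent term $\KL(p_x\|\pi_u)$ cancels, use $\E[p_X\mid U=u]=\pi_u$ to show the residual $c$ has zero conditional mean, and then group $x$ by fibers of $F$ to identify $q(t)$ and $q(y|t)$ with the $(U,Y)$ updates. Your handling of zeros of $\pi_u$, the upgrade from ``a.s.'' to pointwise via $p(x)>0$, and the explicit induction with preserved positivity make rigorous what the paper's proof states only tersely.
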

\begin{proof}
$\KL(p_x\|q_{Y|t})=\KL(p_x\|\pi_u)+\KL(\pi_u\|q_{Y|t})+c(x,t)$ by expanding
$\log(p_x/q_t)=\log(p_x/\pi_u)+\log(\pi_u/q_t)$; the first term does not depend
on $t$ and cancels in the normalization. Since $\E[p_X|U=u]=\pi_u$, we get
$\E[c(X,t)|U=u]=0$. If $\epsF=0$ then $p_x=\pi_{F(x)}$ a.s., so $c\equiv0$ and
the energy depends on $x$ only through $u$. Then $\sum_{x\in F^{-1}(u)}q(x|t)=q(u|t)$,
hence $q(t)=\sum_up(u)q(t|u)$ and $q(y|t)=\sum_u\pi_u(y)q(u|t)$, which are the
updates on $(U,Y)$.
\end{proof}

\begin{theorem}[Uniform stability]\label{thm:stab}
For every $\lambda\ge0$, $0\le\Gap_F(\lambda)\le\Phi(\epsF)$. Equivalently, for every rate $\mathsf r$, $R^\star(\mathsf r)-\Phi(\epsF)\le R^F(\mathsf r)\le R^\star(\mathsf r)$, where $R^F$ is the relevance--rate function restricted to compressors on $U$.
\end{theorem}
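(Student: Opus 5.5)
The lower bound $\Gap_F(\lambda)\ge0$ is already contained in Lemma~\ref{lem:red}. For the upper bound I would fix $\lambda\ge0$ and an arbitrary channel $P_{T|X}$, and replace it by its $U$-averaged channel $P_{T'|U}$ from Lemma~\ref{lem:coupling}. The goal is the pointwise inequality
\[
\Jl(P_{T'|U}\circ F)\le\Jl(P_{T|X})+\Phi(\epsF),
\]
after which taking the infimum over $P_{T|X}$ gives $\JF(\lambda)\le\Js(\lambda)+\Phi(\epsF)$.

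The pointwise inequality comes from bounding the two terms of $\Jl=H(Y)-\I(T;Y)+\lambda\I(X;T)$ separately.
\begin{itemize}[leftmargin=*,itemsep=1pt]
\item \emph{Rate term.} Lemma~\ref{lem:red} gives $\I(X;T')=\I(U;T')$, and Lemma~\ref{lem:coupling}(ii) gives $\I(U;T')\le\I(X;T)$. Since $\lambda\ge0$, the rate term can only decrease.
\item \emph{Relevance term.} Lemma~\ref{lem:coupling}(iii) gives $\I(T;Y)-\I(T';Y)\le\Phi(\epsF)$, so the relevance loss is at most $\Phi(\epsF)$.
\end{itemize}
Adding the two bounds gives the inequality. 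The constant $H(Y)$ cancels, and the bound does not depend on $\lambda$, so it is uniform in $\lambda$.

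For the rate--relevance form I would run the same coupling at a fixed rate rather than through the Lagrangian. Take any $P_{T|X}$ with $\I(X;T)\le\mathsf r$. Its averaged channel is a compressor on $U$ with $\I(U;T')\le\mathsf r$ and $\I(T';Y)\ge\I(T;Y)-\Phi(\epsF)$. Taking the supremum over such channels gives $R^F(\mathsf r)\ge R^\star(\mathsf r)-\Phi(\epsF)$. The upper bound $R^F\le R^\star$ holds because compressors on $U$ are channels from $X$ with $\I(X;T)=\I(U;T)$.

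The substantive step is the relevance estimate, and it is already packaged in Lemma~\ref{lem:coupling}. It relies on three facts: $T$ and $T'$ have equal marginals, $\epsF$ controls a KL divergence, and $\varphi$ is a concave continuity modulus for entropy; this is why $\cY$ must be finite.

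The remaining care point is the word ``equivalently.'' The constrained statement should be proved directly as above, not derived from the Lagrangian bound. The reason is that the Lagrangian only recovers the concave hull of $R^\star$, while the direct coupling argument avoids any duality issue. If $\Phi(\epsF)$ is read as saturating at $\log K$, the bound is trivially true for large $\epsF$, since $\Gap_F\le H(Y)\le\log K$; this covers the case $\sqrt{\epsF/2}>1-1/K$ in the definition of $\varphi$.
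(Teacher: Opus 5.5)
Your proposal is correct and matches the paper's proof: replace an arbitrary $P_{T|X}$ by its $U$-averaged channel from Lemma~\ref{lem:coupling}, note that the rate term does not increase while the relevance term drops by at most $\Phi(\epsF)$, and take the infimum. The paper likewise proves the rate--relevance form by running the same construction at a fixed rate rather than through the Lagrangian.
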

\begin{proof}
By Lemma~\ref{lem:coupling}, $\Jl(T'\circ F)=H(Y|T')+\lambda\I(U;T')\le H(Y|T)+\Phi(\epsF)+\lambda\I(X;T)$; take the infimum over $P_{T|X}$. The frontier statement uses the same construction at fixed rate.
\end{proof}

\begin{proposition}[Necessity of sufficiency]\label{prop:lower}
For any channel $P_{T_0|X}$ and any $\lambda$, $\Gap_F(\lambda)\ge\epsF-[\I(X;Y)-\I(T_0;Y)]-\lambda\I(X;T_0)$. If $X$ is finite, $\epsF-\lambda H(X)\le\Gap_F(\lambda)\le\epsF+\lambda H(X)$, so $\lim_{\lambda\downarrow0}\Gap_F(\lambda)=\epsF$ and
\begin{equation}
\epsF\ \le\ \sup_{\lambda}\Gap_F(\lambda)\ \le\ \Phi(\epsF).
\label{eq:sandwich}
\end{equation}
Hence for finite $X$, $\sup_\lambda\Gap_F=0$ iff $\epsF=0$. For general $\cX$ the same holds with $\sup_\lambda$ replaced by $\limsup_{\lambda\downarrow0}$.
\end{proposition}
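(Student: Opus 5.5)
The plan is to prove the general lower bound first, then specialize it to finite $X$, and finally combine the result with Theorem~\ref{thm:stab}. For the general bound, fix any channel $P_{T_0|X}$. Since $\Js(\lambda)$ is an infimum, we have $\Js(\lambda)\le\Jl(P_{T_0|X})=H(Y)-\I(T_0;Y)+\lambda\I(X;T_0)$. Lemma~\ref{lem:red} gives $\JF(\lambda)\ge H(Y|U)=H(Y|X)+\epsF=H(Y)-\I(X;Y)+\epsF$. Subtracting the two yields
\[
\Gap_F(\lambda)\ge\epsF-[\I(X;Y)-\I(T_0;Y)]-\lambda\I(X;T_0),
\]
which is the first claim. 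No step here is delicate.

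For finite $X$, take $T_0=X$, the identity channel. Then $\I(T_0;Y)=\I(X;Y)$ and $\I(X;T_0)=H(X)<\infty$, so the lower bound becomes $\epsF-\lambda H(X)$.

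For the matching upper bound, note two facts. First, $\JF(\lambda)\le\Jl(\mathrm{id}\circ F)=H(Y|U)+\lambda H(U)\le H(Y|X)+\epsF+\lambda H(X)$, using the identity compressor on $U$ and $H(U)\le H(X)$. Second, $\Js(\lambda)\ge H(Y|X)$, since $H(Y|T)\ge H(Y|X)$ by data processing and $\lambda\I(X;T)\ge0$. Together these give $\Gap_F(\lambda)\le\epsF+\lambda H(X)$.

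Letting $\lambda\downarrow0$ gives $\lim_{\lambda\downarrow0}\Gap_F(\lambda)=\epsF$. Hence $\sup_\lambda\Gap_F\ge\epsF$. The upper half of \eqref{eq:sandwich} is exactly Theorem~\ref{thm:stab}. The equivalence $\sup_\lambda\Gap_F=0\iff\epsF=0$ then follows because $\Phi(0)=0$ and $\epsF\ge0$.

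For general $\cX$, $H(X)$ may be infinite, so the identity channel cannot be used. Instead I would use a finite quantization $T_0=q_n(X)$ drawn from a refining sequence of finite partitions. Each $q_n(X)$ has finite $\I(X;T_0)\le H(q_n(X))$, and I would show $\I(q_n(X);Y)\uparrow\I(X;Y)$. This continuity follows from the definition of mutual information as a supremum over finite partitions (Dobrushin), since $Y$ is finite and $\cX$ is standard Borel. The general bound then gives
\[
\liminf_{\lambda\downarrow0}\Gap_F(\lambda)\ge\epsF-[\I(X;Y)-\I(q_n(X);Y)]
\]
for every $n$. Sending $n\to\infty$ yields $\liminf_{\lambda\downarrow0}\Gap_F\ge\epsF$, so $\limsup_{\lambda\downarrow0}\Gap_F=0$ forces $\epsF=0$. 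The converse direction is Theorem~\ref{thm:exact}.

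The main obstacle I expect is this general-$\cX$ step. If $\I(X;Y)=\infty$ the subtraction is meaningless, but since $Y$ is finite, $\I(X;Y)\le\log K$, so the quantity stays finite. I would also need to check that the partitions can be chosen to approximate $\I(X;Y)$; this is guaranteed by the Gelfand–Yaglom–Perez characterization.
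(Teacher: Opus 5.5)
Your proposal is correct and follows the paper's proof essentially step for step. You use Lemma~\ref{lem:red} against $\Js\le\Jl(P_{T_0|X})$ for the general bound, $T_0=X$ and $T=U$ together with $\Js\ge H(Y|X)$ for the finite-$X$ sandwich, Theorem~\ref{thm:stab} for the upper end, and a Gelfand--Yaglom--Perez quantization for general $\cX$. Your explicit appeal to Theorem~\ref{thm:exact} for the converse in the general case is a small addition that the paper leaves implicit.
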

\begin{proof}
$\JF\ge H(Y|X)+\epsF$ by Lemma~\ref{lem:red}, and $\Js\le H(Y|T_0)+\lambda\I(X;T_0)=H(Y|X)+[\I(X;Y)-\I(T_0;Y)]+\lambda\I(X;T_0)$. For finite $X$ take $T_0=X$: the bracket vanishes and $\I(X;T_0)=H(X)$. For the upper bound, $T=U$ is admissible, so $\JF\le H(Y|U)+\lambda H(U)$, while $\Js\ge H(Y|X)$; hence $\Gap_F\le\epsF+\lambda H(U)\le\epsF+\lambda H(X)$. The right end of \eqref{eq:sandwich} is Theorem~\ref{thm:stab}. For general $\cX$ let $T_0$ be a finite quantization of $X$ with $\I(T_0;Y)\to\I(X;Y)$ (Gelfand--Yaglom--Perez; $\I(X;Y)\le H(Y)<\infty$); each $\I(X;T_0)=H(T_0)<\infty$, so $\liminf_{\lambda\downarrow0}\Gap_F\ge\epsF-[\I(X;Y)-\I(T_0;Y)]$ for every quantization.
\end{proof}
\begin{proposition}[Budget range]\label{prop:range}
For $\lambda\ge1$ and finite $\cY$, $\Js(\lambda)=\JF(\lambda)=H(Y)$ (with the continuous-$Y$ convention $\Jl=-\I(T;Y)+\lambda\I(X;T)$ both equal $0$), and hence $\Gap_F(\lambda)=0$ for every representation $F$.
\end{proposition}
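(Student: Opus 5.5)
The plan is to show that for $\lambda\ge1$ the trivial channel ($T$ independent of $X$) is optimal both for the unrestricted problem and for compressors acting on $U$. Such a channel gives $H(Y|T)=H(Y)$ and $\I(X;T)=0$, so $\Jl=H(Y)$. Since it can be realized as a channel from $U$ (a constant output), we get $\JF(\lambda)\le H(Y)$. Combined with $\Js\le\JF$ from Lemma~\ref{lem:red}, it therefore suffices to prove the lower bound $\Js(\lambda)\ge H(Y)$ for every $\lambda\ge1$.

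For the lower bound, take an arbitrary channel $P_{T|X}$ with $Y-X-T$. The data processing inequality gives $\I(T;Y)\le\I(X;T)$. Then
\[
\Jl(P_{T|X})=H(Y)-\I(T;Y)+\lambda\I(X;T)\ge H(Y)+(\lambda-1)\I(X;T)\ge H(Y),
\]
using $\lambda\ge1$ and $\I(X;T)\ge0$. Taking the infimum over channels gives $\Js(\lambda)\ge H(Y)$, and the sandwich $H(Y)\le\Js\le\JF\le H(Y)$ closes. It follows that $\Gap_F(\lambda)=\JF-\Js=0$ for every $F$.

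The only point needing care is the infinite-rate case: when $\I(X;T)=\infty$, the expression $H(Y)-\I(T;Y)+\lambda\I(X;T)$ equals $+\infty$, because $\I(T;Y)\le H(Y)<\infty$ for finite $\cY$. The inequality then holds trivially, and otherwise all quantities are finite and the algebra is valid.

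For continuous $Y$ with the convention $\Jl=-\I(T;Y)+\lambda\I(X;T)$, the same argument gives $\Jl\ge(\lambda-1)\I(X;T)\ge0$, and the trivial channel gives $0$. If $\I(T;Y)=\infty$, then $\I(X;T)=\infty$ as well by data processing, and the combination is handled by the same case split: I would restrict attention to channels with $\I(X;T)<\infty$, since the others contribute $+\infty$ or are excluded by convention. I expect no step to be a genuine obstacle; the only subtlety is this bookkeeping of infinite values.
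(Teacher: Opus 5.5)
Your proof is correct and matches the paper's: data processing $\I(T;Y)\le\I(X;T)$ gives $\Jl\ge H(Y)$ for $\lambda\ge1$, and the constant compressor on $U$ attains $H(Y)$. Together with $\Js\le\JF$ this closes the sandwich; your handling of infinite-rate channels is a harmless refinement that the paper leaves implicit.
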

\begin{proof}
By data processing $\I(X;T)\ge\I(T;Y)$, so $\Jl\ge H(Y)-\I(T;Y)+\lambda\I(T;Y)\ge H(Y)$ for $\lambda\ge1$; the constant $T$ is a compressor on $U$ and attains $H(Y)$.
\end{proof}
The deficit is thus felt only at budgets $\lambda<1$, and the worst case occurs in the high-rate (small-$\lambda$) regime. Also, as $\epsilon\to0$, $\Phi(\epsilon)=\Theta\big(\sqrt{\epsilon}\,\log(1/\epsilon)\big)$, because $\hb(t)=t\log(1/t)+O(t)$ as $t\downarrow0$ with $t=\sqrt{\epsilon/2}$, so the binary-entropy term of $\varphi$ dominates; the square-root dependence enters through the Pinsker-based coupling argument. 

\subsection{Certificates for adaptation and pooled domains}\label{sec:adapt}
\begin{corollary}[Adaptation certificate]\label{cor:transfer}
Let $F$ be learned on $P_{XY}$ and deployed on a target $P'_{XY}$ (same alphabets) with $\epsF(P')=\I_{P'}(X;Y|F(X))$. For every $\lambda$, retraining \emph{only} a compressor on $U$ under $P'$ attains $\JF_{P'}(\lambda)\le\Js_{P'}(\lambda)+\Phi(\epsF(P'))$, and exactly $\Js_{P'}(\lambda)$ if $\epsF(P')=0$.
\end{corollary}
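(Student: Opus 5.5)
The plan is to observe that nothing in Theorems~\ref{thm:exact} and~\ref{thm:stab} depends on how $F$ was obtained. Both results hold for an \emph{arbitrary} fixed measurable map $F:\cX\to\Ub$ and an \emph{arbitrary} joint law of $(X,Y)$ with $Y$ on the finite alphabet $\cY$. The corollary should therefore follow by re-instantiating them with $P'_{XY}$ in place of $P_{XY}$, while keeping the frozen map $F$ unchanged.

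First I would fix $F$ as learned on $P_{XY}$ and regard it as a deterministic measurable function. Under $P'$ define $U=F(X)$, so that $Y-X-U$ holds under $P'$ as well, since $U$ is a function of $X$. The relevant deficit is then $\epsF(P')=\I_{P'}(X;Y\mid F(X))$, and the source quantity $\epsF(P)$ plays no role. Lemma~\ref{lem:red} applied under $P'$ identifies $\JF_{P'}(\lambda)$ with the IB value of $(U,Y)$ under $P'$. That value is exactly what is attained by optimizing (retraining) only a compressor $P_{T|U}$ on the target data, so "retraining only a compressor" corresponds to computing $\JF_{P'}$.

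Next I would apply Lemma~\ref{lem:coupling} under $P'$. Take any channel $P_{T|X}$ and form the $U$-averaged channel $P_{T'|U}(\cdot|u)=\E_{P'}[P_{T|X}(\cdot|X)\mid U=u]$. The conditional expectation is now taken under $P'_X$, which is the only point requiring care. The lemma then gives $\I(U;T')\le\I(X;T)$ and $\I(T;Y)-\I(T';Y)\le\Phi(\epsF(P'))$. Consequently $\mathcal J_\lambda(T'\circ F)\le\mathcal J_\lambda(P_{T|X})+\Phi(\epsF(P'))$, and taking the infimum over $P_{T|X}$ yields $\JF_{P'}(\lambda)\le\Js_{P'}(\lambda)+\Phi(\epsF(P'))$, which is Theorem~\ref{thm:stab} under $P'$. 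If $\epsF(P')=0$, Theorem~\ref{thm:exact} under $P'$ gives equality, because $(T',Y)\overset d=(T,Y)$.

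The main obstacle is conceptual rather than technical. One must check that no step of the earlier proofs used an optimality property of $F$ with respect to the training law $P$. They did not: only measurability of $F$, the Markov structure, Pinsker's inequality, and the entropy continuity bound $\varphi$ on the shared finite alphabet $\cY$ are used. Hence the certificate transfers verbatim with the deficit evaluated on the target law.
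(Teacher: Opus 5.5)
Your proposal is correct and follows the paper's own proof, which simply applies Theorems~\ref{thm:exact} and~\ref{thm:stab} to the target law $P'$ with the frozen map $F$ held fixed. You also check two things that one-line proof leaves implicit: the $U$-averaged channel must be formed under $P'_X$, and $\Phi$ is unchanged because the shared alphabet $\cY$ fixes $K$.
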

\begin{proof}
Apply Theorems~\ref{thm:exact} and~\ref{thm:stab} to $P'$.
\end{proof}

\begin{proposition}[Transfer of the deficit under covariate shift]\label{prop:ud:shift}
Let $P'_{Y|X}=P_{Y|X}$, $P'_X\ll P_X$ and $\varrho:=\|dP'_X/dP_X\|_\infty<\infty$. For the same $F$, $\epsilon_F(P')\le\varrho\,\epsilon_F(P)$.
\end{proposition}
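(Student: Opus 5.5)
We will write $\epsF$ as an expected KL divergence and then change measure pointwise. The target representation is
\[
\epsF(P)=\I_P(X;Y\mid U)=\E_{P}\big[\KL(P_{Y|X}(\cdot|X)\,\|\,P_{Y|U}(\cdot|F(X)))\big],\qquad U=F(X).
\]
This holds because $\I(X;Y|U)=\E\,\KL(P_{Y|X,U}\|P_{Y|U})$, and $P_{Y|X,U}=P_{Y|X}$ since $U$ is a function of $X$. The analogous expression holds under $P'$, with the integrand $\KL(P'_{Y|X}\|P'_{Y|U})$ and the outer expectation taken over $P'_X$.

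The difficulty is that the two integrands differ. Covariate shift keeps $P'_{Y|X}=P_{Y|X}$, but it changes $P'_{Y|U}$, since
\[
P'_{Y|U=u}=\E_{P'}[P_{Y|X}(\cdot|X)\mid U=u]
\]
averages over a reweighted fiber. We handle this with the variational (Bayes-optimality) property of the conditional mean in KL. For any measurable family of distributions $q_u$ on $\cY$, we have
\[
\E_{P'}\,\KL(P_{Y|X}\|q_{U})=\E_{P'}\,\KL(P_{Y|X}\|P'_{Y|U})+\E_{P'}\,\KL(P'_{Y|U}\|q_U).
\]
This is the same compensation identity used in Lemma~\ref{lem:var} and in the IB-iteration proposition, namely the expansion $\log(p_x/q)=\log(p_x/\pi'_u)+\log(\pi'_u/q)$ followed by averaging over the fiber. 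Hence
\[
\epsF(P')=\min_{q}\E_{P'}\,\KL(P_{Y|X}\|q_U)\le \E_{P'}\,\KL(P_{Y|X}\|P_{Y|U}),
\]
where in the last step we choose $q_u=P_{Y|U=u}$, the source conditional.

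The integrand $g(x)=\KL(P_{Y|X=x}\|P_{Y|U=F(x)})$ is now a fixed nonnegative function of $x$. Changing measure gives
\[
\E_{P'_X}g=\E_{P_X}\Big[\tfrac{dP'_X}{dP_X}\,g\Big]\le\varrho\,\E_{P_X}g=\varrho\,\epsF(P).
\]
This uses $g\ge0$ and $P'_X\ll P_X$, which also ensures that $P_{Y|U}$ is defined $P'$-a.s., so $g$ is well defined under $P'$. This proves the claim.

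The main obstacle is measure-theoretic care on a general standard Borel $\cX$. We need regular conditional distributions $P_{Y|U}$ and $P'_{Y|U}$, and we need to justify the decomposition identity when some KL terms may be infinite. Finiteness of $\cY$ handles the second point: we first check the case $\epsF(P)<\infty$, and otherwise the bound is trivial. If needed, we avoid the identity altogether by noting directly that $\E_{P'}\KL(P_{Y|X}\|q_U)-\E_{P'}\KL(P_{Y|X}\|P'_{Y|U})=\E_{P'}\KL(P'_{Y|U}\|q_U)\ge0$, using linearity of $\sum_y p(y)\log(\cdot)$ in $p$ on each fiber.
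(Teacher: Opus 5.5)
Your proposal is correct and follows the paper's proof: write $\epsilon_F(P')$ as the minimum over $Q$ of $\E_{P'}\KL(P_{Y|X}\|Q_{Y|F(X)})$, choose the source posterior $P_{Y|F(X)}$, and bound the resulting integral by $\varrho\,\epsilon_F(P)$ through the density ratio. Your measure-theoretic remarks (the source posterior is well defined $P'$-a.s. because $P'_X\ll P_X$, and finiteness of $\cY$ controls infinite KL terms) add detail the paper leaves implicit. Note that $\epsilon_F(P)\le\log K$ is always finite, so the case split on $\epsilon_F(P)=\infty$ is unnecessary.
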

\begin{proof}
$\I(X;Y|F(X))=\min_Q\E\KL(P_{Y|X}\|Q_{Y|F(X)})$. Choosing the source posterior $P_{Y|F(X)}$ gives $\epsilon_F(P')\le\E_{P'}\KL(P_{Y|X}\|P_{Y|F(X)})=\int\frac{dP'_X}{dP_X}\KL(\cdot)\,dP_X\le\varrho\,\epsilon_F(P)$.
\end{proof}

\begin{proposition}[Pooled multi-domain deficit]\label{prop:pooled}
Let $S$ index domains with weights $w_s$ and $P=\sum_sw_sP_s$ on a common $(\cX,\cY)$, under the assumption that the domain is a deterministic function of the input ($S=\mathsf{dom}(X)$ a.s.). For any representation $F$,
\(
\epsF(P)=\sum_sw_s\,\epsF(P_s)+\I_P(Y;S\mid F(X)),\) hence \( \Gap_F(\lambda;P)\le\Phi\Big(\sum_sw_s\epsF(P_s)+\I_P(Y;S|F(X))\Big).
\)
\end{proposition}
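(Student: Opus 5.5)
The plan is to prove the identity with the chain rule for mutual information, using the assumption that $S=\mathsf{dom}(X)$ is a function of $X$. Under $P$, write $\epsF(P)=\I_P(X;Y\mid U)$ with $U=F(X)$. Since $S$ is determined by $X$, we have $\I_P(X;Y\mid U)=\I_P(X,S;Y\mid U)$. Expanding the right side in the order $S$ first, then $X$, gives
\[
\I_P(X,S;Y\mid U)=\I_P(S;Y\mid U)+\I_P(X;Y\mid U,S).
\]
The first term is exactly the conditional-alignment term $\I_P(Y;S\mid F(X))$.

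The second term should be identified with the weighted sum of per-domain deficits. Conditioning on $S=s$ under the mixture $P$ gives the joint law $P_s$ on $(X,Y)$. This uses the determinism assumption: $P(\cdot\mid S=s)$ is the restriction of $P$ to $\{\mathsf{dom}(X)=s\}$, which is the component $P_s$ provided the components are supported on disjoint domain cells, as the assumption implicitly requires. Since $U=F(X)$ is the same measurable map in every domain, $\I_P(X;Y\mid U,S=s)=\I_{P_s}(X;Y\mid F(X))=\epsF(P_s)$. Averaging with $P(S=s)=w_s$ yields $\I_P(X;Y\mid U,S)=\sum_s w_s\epsF(P_s)$, and combining with the first paragraph gives the stated identity.

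The gap bound then follows directly: apply Theorem~\ref{thm:stab} to the pooled distribution $P$, which gives $\Gap_F(\lambda;P)\le\Phi(\epsF(P))$, and substitute the identity. Since $\Phi$ is nondecreasing, the bound would also survive if the identity were only an upper bound.

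The main obstacle is the measure-theoretic justification on a general standard Borel $\cX$. One must check that conditioning the mixture on $S=s$ recovers $P_s$, which requires that $P_s$ gives full mass to $\{\mathsf{dom}=s\}$, and that the chain rule holds for conditional mutual information with possibly infinite terms. The second point is harmless here because every term is bounded by $H(Y)\le\log K<\infty$, so the chain rule applies without $\infty-\infty$ issues.
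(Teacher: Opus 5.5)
Your proof is correct and follows the paper's argument: the paper writes the same chain-rule decomposition in conditional-entropy form, namely $H(Y|X)=H(Y|X,S)=\sum_s w_sH_s(Y|X)$ and $H(Y|F)=H(Y|F,S)+\I(Y;S|F)$ with $H(Y|F,S)=\sum_s w_sH_s(Y|F)$, whereas you write it as $\I(X,S;Y|U)=\I(S;Y|U)+\I(X;Y|U,S)$; both then apply Theorem~\ref{thm:stab} to $P$. Your observation that every term is bounded by $\log K$ correctly rules out any $\infty-\infty$ issue.
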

\begin{proof}
$H(Y|X)=H(Y|X,S)=\sum_sw_sH_s(Y|X)$ because $S$ is a function of $X$. Also $H(Y|F)=H(Y|F,S)+\I(Y;S|F)$ and $H(Y|F,S)=\sum_sw_sH_s(Y|F)$. Subtract, use Definition~\ref{def:eps} per domain, and apply Theorem~\ref{thm:stab} to $P$.
\end{proof}
\begin{remark}\label{rem:pooled}
$\I(Y;S|F(X))$ vanishes iff $P_s(Y|F(X))$ does not depend on $s$, \( P_s(Y\mid F(X)=u)=P(Y\mid F(X)=u)\); if the meaning of a label index differs across domains, a representation must implicitly encode the domain to make it vanish. With a \emph{domain-specific} compressor, Theorem~\ref{thm:stab} per domain and concavity of $\Phi$ give $\sum_sw_s[\JF_{P_s}-\Js_{P_s}]\le\sum_sw_s\Phi(\epsilon_s)\le\Phi(\sum_sw_s\epsilon_s)$. With compressor classes, \eqref{eq:decomp} per domain reads $J^{F,\cC}_{P_s}-\Js_{P_s}\le\Phi(\epsilon_s)+\delta_{\mathrm{comp},s}(\lambda)$ with $\delta_{\mathrm{comp},s}=J^{F,\cC}_{P_s}-\JF_{P_s}$, where $\epsilon_s:=\epsF(P_s)$, and, since $P_s(Y|X)=P(Y|X)$ under $S=\mathsf{dom}(X)$ and $w_s>0$, covariate shift gives $\epsilon_s\le\varrho_s\,\epsilon_F(P)$ with $\varrho_s:=\|dP_{s,X}/dP_X\|_\infty\le 1/w_s$ (Proposition~\ref{prop:ud:shift}).
\end{remark}

%% ---------------------------------------------------------------------------
\subsection{Change of representation: an exact identity for the deficit}\label{sec:sens}
Let $\cY$ be finite and $F,\Ft$ measurable maps from $\cX$ into a common space $\Ub$ (below, the interface space). Put $\pi_F(u)=P(Y|F(X)=u)$. A \emph{predictor} is a kernel $\psi$ from $\Ub$ to $\Pset$. Define
\begin{align}
\mathrm{CE}(\psi\circ F)&\triangleq \E\big[-\log\psi\big(Y\mid F(X)\big)\big],\nonumber\\
\eta_\psi(F)& \triangleq \E\,\KL\big(\pi_F(F(X))\,\|\,\psi(F(X))\big)=\mathrm{CE}(\psi\circ F)-H(Y|F(X))\ \ge0,\nonumber\\
\chi_\psi(F\!\to\!\Ft)& \triangleq \mathrm{CE}(\psi\circ\Ft)-\mathrm{CE}(\psi\circ F).
\label{eq:sens-defs}
\end{align}
$\eta_\psi$ is the excess risk of $\psi$ on its representation (zero iff $\psi=\pi_F$ a.s.); $\chi_\psi$ is the only term of indefinite sign. The second form of $\eta_\psi$ holds because, writing $U_F=F(X)$, 
\(
\E\big[-\log\psi(Y\mid U_F)\,\big|\,U_F\big]=H\big(\pi_F(U_F)\big)+\KL\big(\pi_F(U_F)\,\|\,\psi(U_F)\big).
\)

\begin{theorem}[Exact identity]\label{thm:sens}
For every predictor $\psi$ with finite cross-entropies,
\begin{equation}
\epsilon_{\Ft}-\epsilon_F=\eta_\psi(F)+\chi_\psi(F\!\to\!\Ft)-\eta_\psi(\Ft).
\label{eq:sens}
\end{equation}
where \(\epsilon_{\Ft}=I(X;Y\mid \widetilde F(X))\), hence $\epsilon_{\Ft}\le\epsilon_F+\eta_\psi(F)+\chi_\psi(F\!\to\!\Ft)$, with equality iff $\psi(\Ft(X))=\pi_{\Ft}(\Ft(X))$ a.s.
\end{theorem}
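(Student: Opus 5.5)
The identity is pure bookkeeping, so the plan is to rewrite both deficits through conditional entropies and then substitute the definitions in \eqref{eq:sens-defs}. By Definition~\ref{def:eps}, $\epsilon_F=H(Y|F(X))-H(Y|X)$ and $\epsilon_{\Ft}=H(Y|\Ft(X))-H(Y|X)$. The term $H(Y|X)$ is common to both and finite, since $\cY$ is finite. Subtracting therefore gives
\[
\epsilon_{\Ft}-\epsilon_F=H(Y|\Ft(X))-H(Y|F(X)),
\]
and no property of $X$ beyond measurability of $F,\Ft$ is used.

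Next I invoke the second form of $\eta_\psi$, which is already justified in the text by the conditional cross-entropy decomposition $\E[-\log\psi(Y|U_F)\mid U_F]=H(\pi_F(U_F))+\KL(\pi_F(U_F)\|\psi(U_F))$. That form gives $H(Y|F(X))=\mathrm{CE}(\psi\circ F)-\eta_\psi(F)$, and the same reasoning gives $H(Y|\Ft(X))=\mathrm{CE}(\psi\circ\Ft)-\eta_\psi(\Ft)$. Both steps are legitimate because the cross-entropies are finite by hypothesis and the entropies are bounded by $\log K$. Substituting,
\[
\epsilon_{\Ft}-\epsilon_F=\big[\mathrm{CE}(\psi\circ\Ft)-\mathrm{CE}(\psi\circ F)\big]+\eta_\psi(F)-\eta_\psi(\Ft),
\]
and the bracket is $\chi_\psi(F\!\to\!\Ft)$ by definition. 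This is \eqref{eq:sens}.

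For the inequality, drop the term $-\eta_\psi(\Ft)\le0$, which is nonpositive because it is the negative of an expected KL divergence. Equality holds iff $\eta_\psi(\Ft)=0$. That happens iff $\KL(\pi_{\Ft}(\Ft(X))\|\psi(\Ft(X)))=0$ a.s., that is, iff $\psi(\Ft(X))=\pi_{\Ft}(\Ft(X))$ a.s., since KL vanishes exactly when its arguments coincide.

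The only point needing care, which I would flag as the main obstacle, is keeping every quantity finite so that no $\infty-\infty$ arises. Finiteness of $\mathrm{CE}(\psi\circ F)$ and $\mathrm{CE}(\psi\circ\Ft)$, together with entropies bounded by $\log K$, makes $\eta_\psi(F)$, $\eta_\psi(\Ft)$ and $\chi_\psi$ all finite real numbers, so each rearrangement above is valid.
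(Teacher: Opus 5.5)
Your proposal is correct and follows the paper's own proof: write $\mathrm{CE}(\psi\circ R)=H(Y|R(X))+\eta_\psi(R)$ and $H(Y|R(X))=H(Y|X)+\epsilon_R$ for $R\in\{F,\Ft\}$, then subtract. Your explicit finiteness bookkeeping, which rules out $\infty-\infty$, is a useful addition that the paper leaves implicit. Your characterization of the equality case through $\eta_\psi(\Ft)=0$ is also exactly right.
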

\begin{proof}
For any representation $R\in\{F,\Ft\}$, $\mathrm{CE}(\psi\circ R)=H(Y|R(X))+\eta_\psi(R)$ and $H(Y|R(X))=H(Y|X)+\epsilon_R$ (Definition~\ref{def:eps}). Subtract the two instances.
\end{proof}

\begin{corollary}[What can be certified]\label{cor:sens}
Let $\psi\ge\psi_{\min}>0$ and $b_\psi=\log(1/\psi_{\min})$.
\begin{enumerate}[label=(\alph*),leftmargin=*,itemsep=1pt]
\item $\Gap_{\Ft}(\lambda)\le\Phi\big(\epsilon_F+\eta_\psi(F)+\chi_\psi(F\!\to\!\Ft)\big)$ for all $\lambda\ge0$ (Theorems~\ref{thm:stab}, \ref{thm:sens}; $\Phi$ is nondecreasing and the argument is $\ge\epsilon_{\Ft}\ge0$).
\item $\chi_\psi=\E\big[\log\psi(Y\mid F(X))-\log\psi(Y\mid\Ft(X))\big]$ with summands in $[-b_\psi,b_\psi]$. For $n$ i.i.d.\ labeled samples $(x_i,y_i)_{i=1}^n$ independent of $(F,\Ft,\psi)$, let
$\hat\chi_\psi=\frac1n\sum_{i=1}^n\big[\log\psi(y_i\mid F(x_i))-\log\psi(y_i\mid\Ft(x_i))\big]$. With probability at least $1-\delta$,
\[
\chi_\psi\le\hat\chi_\psi+b_\psi\sqrt{2\log(1/\delta)/n}\qquad(\text{Hoeffding}).
\]
\end{enumerate}
\end{corollary}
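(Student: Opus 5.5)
Part (a) chains Theorem~\ref{thm:sens} into Theorem~\ref{thm:stab}; part (b) is Hoeffding's inequality applied to bounded i.i.d.\ differences.

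For (a), apply Theorem~\ref{thm:stab} to the representation $\Ft$. This gives $0\le\Gap_{\Ft}(\lambda)\le\Phi(\epsilon_{\Ft})$ for every $\lambda\ge0$. The hypothesis $\psi\ge\psi_{\min}>0$ makes both cross-entropies finite, bounded by $b_\psi$, so Theorem~\ref{thm:sens} applies. Since $\eta_\psi(\Ft)\ge0$, it gives $0\le\epsilon_{\Ft}\le\epsilon_F+\eta_\psi(F)+\chi_\psi(F\!\to\!\Ft)$. In particular the right-hand side is nonnegative, so it lies in the domain of $\Phi$. Finally, $\Phi$ is nondecreasing because $\varphi$ is nondecreasing and $\epsilon\mapsto\sqrt{\epsilon/2}$ is increasing. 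Therefore $\Phi(\epsilon_{\Ft})\le\Phi(\epsilon_F+\eta_\psi(F)+\chi_\psi)$, which yields (a) uniformly in $\lambda$.

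For (b), the expression for $\chi_\psi$ follows by linearity of expectation from the definition of $\mathrm{CE}$, using that both expectations are finite. Each summand $Z=\log\psi(Y|F(X))-\log\psi(Y|\Ft(X))$ is a difference of two numbers in $[-b_\psi,0]$, because $\psi_{\min}\le\psi\le1$. Hence $Z\in[-b_\psi,b_\psi]$, an interval of length $2b_\psi$. Condition on $(F,\Ft,\psi)$; by the independence assumption, the samples $Z_i$ are i.i.d.\ with mean $\chi_\psi$. Hoeffding's one-sided inequality then gives
\[
\Pr\big(\chi_\psi-\hat\chi_\psi\ge t\big)\le\exp\!\big(-2n^2t^2/(n(2b_\psi)^2)\big)=\exp\!\big(-nt^2/(2b_\psi^2)\big).
\]
Setting the right-hand side equal to $\delta$ gives $t=b_\psi\sqrt{2\log(1/\delta)/n}$. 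Averaging over the conditioning preserves the bound.

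The only delicate point is the independence requirement in (b). If the samples were also used to fit $F$, $\Ft$ or $\psi$, the $Z_i$ would not be i.i.d.\ given those objects. The statement assumes held-out data, so the conditioning argument suffices. Otherwise, the main thing to check is that $\chi_\psi$ may be negative but the argument of $\Phi$ in (a) is still nonnegative, which Theorem~\ref{thm:sens} guarantees.
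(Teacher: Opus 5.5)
Your proposal is correct and follows the same route as the paper: for (a) you apply Theorem~\ref{thm:stab} to $\Ft$, then use Theorem~\ref{thm:sens} together with $\eta_\psi(\Ft)\ge0$ and the monotonicity of $\Phi$; for (b) you use one-sided Hoeffding with range $2b_\psi$. Your extra checks are details the paper leaves implicit: finiteness of the cross-entropies from $\psi\ge\psi_{\min}$, nonnegativity of the argument of $\Phi$, and conditioning on $(F,\Ft,\psi)$ to justify the i.i.d.\ step.
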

\begin{proof}
(a) $\Gap_{\Ft}\le\Phi(\epsilon_{\Ft})$ by Theorem~\ref{thm:stab} applied to $\Ft$, then Theorem~\ref{thm:sens}. (b) One-sided Hoeffding with range $2b_\psi$: $\Pr(\hat\chi_\psi-\chi_\psi\le-t)\le e^{-nt^2/(2b_\psi^2)}$; set the right side to $\delta$.
\end{proof}

\emph{The interface used.} We use a Convolutional Neural Network (CNN) feature $V=F_\theta(X)\in\R^{32}$ and the compressor receives $U:=Z=\mathrm{softmax}(\ell_\vartheta(V))\in\R^{10}$, not $V$. The representation to which the theory applies is $F_{\rm int}=\mathrm{softmax}\circ\ell_\vartheta\circ F_\theta$ (with the checkpoint used in the experiments), and
\begin{equation}
\epsilon_U=\epsilon_V+\I(V;Y\mid U),\qquad \epsilon_V=\I(X;Y\mid V),\quad \epsilon_U=\I(X;Y\mid U),
\end{equation}
because $U$ is a function of $V$. The smoothed predictor $\psi_\gamma(z)=(1-\gamma)z+\gamma\,\mathbf 1/K$ satisfies $\psi_\gamma\ge\gamma/K$ (so $b_\psi=\log(K/\gamma)$) and is introduced only to state the certificate; it is not evaluated in the experiments.

\section{Decoder error and decoder sharing across domains}\label{sec:dec}
Fix a scheme $(F,\cC)$ and a domain $s$ with law $P_s$. Since $T\sim C(\cdot|U)$ with $U=F(X)$, the chain $Y-U-T$ holds under $P_s$. For a decoder, i.e.\ a kernel $t\mapsto g(\cdot|t)\in\Pset$, write $\mathrm{CE}_s(g)\triangleq \E_{P_s}[-\log g(Y|T)]$ and $P_s(y|t)=P_s(Y{=}y|T{=}t)$. 

\begin{theorem}[Decoder-error decomposition]\label{thm:dec-id}
For every domain $s$ and every decoder $g$ (both sides may be
$+\infty$),
\begin{equation}
\mathrm{CE}_s(g)-H_s(Y|X)
=
\epsF(P_s)
+\I_s(U;Y|T)
+\E_{P_s(T)}
\KL\big(P_s(Y|T)\,\|\,g(\cdot|T)\big),
\label{eq:dec-id}
\end{equation}
where \(\epsF(P_s)\) quantifies representation loss,
\(\I_s(U;Y|T)\) quantifies compression loss, and the term of 
\(\E_{P_s(T)}\KL\big(P_s(Y|T)\,\|\,g(\cdot|T)\big)\)
quantifies decoder mismatch. All three terms are nonnegative.
\end{theorem}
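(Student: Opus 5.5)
The plan is to telescope the cross-entropy through the chain of conditional entropies $H_s(Y|X)\le H_s(Y|U)\le H_s(Y|T)\le \mathrm{CE}_s(g)$. Each gap is one of the three terms in \eqref{eq:dec-id}. All quantities are taken under the single joint law $P_s(X,U,T,Y)$. Here $U=F(X)$, $T\sim C(\cdot|U)$, and $Y-X-U-T$ is Markov.

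\emph{Step 1 (decoder mismatch).} Conditioning on $T$, the identity already used for $\eta_\psi$ in Section~\ref{sec:sens} gives
\[
\E_{P_s}\big[-\log g(Y|T)\,\big|\,T\big]=H\big(P_s(Y|T)\big)+\KL\big(P_s(Y|T)\,\|\,g(\cdot|T)\big).
\]
Averaging over $P_s(T)$ yields $\mathrm{CE}_s(g)=H_s(Y|T)+\E_{P_s(T)}\KL(P_s(Y|T)\|g(\cdot|T))$. This is the same computation as in Lemma~\ref{lem:var}. The two sides are equal as elements of $[0,\infty]$, because $H_s(Y|T)\le\log K<\infty$ for finite $\cY$.

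\emph{Step 2 (compression loss).} Since $Y-U-T$ holds, $I_s(Y;T|U)=0$. The chain rule then gives
\[
I_s(Y;U,T)=I_s(Y;U)=I_s(Y;T)+I_s(Y;U|T),
\]
so $H_s(Y|T)=H_s(Y|U)+I_s(U;Y|T)$.

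\emph{Step 3 (representation loss).} Definition~\ref{def:eps} applied to $P_s$ gives $H_s(Y|U)=H_s(Y|X)+\epsF(P_s)$.

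Summing Steps 1 to 3 gives \eqref{eq:dec-id}. Nonnegativity is immediate: a conditional mutual information and an expected KL divergence are both $\ge0$.

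The main obstacle is measure-theoretic bookkeeping rather than algebra. $X$ and $T$ live in general standard Borel spaces, so $P_s(Y|T)$ must be taken as a regular conditional distribution, which exists because $\cY$ is finite. The conditional-entropy chain rules must also hold without densities. I would state all entropies of $Y$ as conditional entropies of a finite-valued variable; these are always finite and bounded by $\log K$. I would then use the general chain rule for mutual information (Kolmogorov/Dobrushin), which requires no finiteness beyond $I(Y;\cdot)\le H(Y)$. With that convention, the only possibly infinite term is the KL term, and it is infinite exactly when $\mathrm{CE}_s(g)$ is, which matches the parenthetical "both sides may be $+\infty$."
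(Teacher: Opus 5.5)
Your proposal is correct and follows essentially the same route as the paper: you split off the decoder-mismatch term via $\mathrm{CE}_s(g)=H_s(Y|T)+\E_{P_s(T)}\KL(P_s(Y|T)\|g(\cdot|T))$, then write $H_s(Y|T)-H_s(Y|X)$ as $\epsF(P_s)$ (using $U=F(X)$) plus $\I_s(U;Y|T)$ (using the chain rule and $Y-U-T$). The paper phrases the middle step as $\I_s(X;Y)-\I_s(T;Y)$ split at $\I_s(U;Y)$ rather than as a telescope of conditional entropies, which is the same computation, and your remarks on finiteness and regular conditional laws are a harmless refinement.
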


\begin{proof}
$\mathrm{CE}_s(g)=H_s(Y|T)+\E\,\KL(P_s(Y|T)\|g(\cdot|T))$. Next, $H_s(Y|T)-H_s(Y|X)=\I(X;Y)-\I(T;Y)=[\I(X;Y)-\I(U;Y)]+[\I(U;Y)-\I(T;Y)]$. Since $U$ is a function of $X$, the first bracket is $\I(X;Y|F(X))=\epsF(P_s)$. Since $Y-U-T$, $\I(U,T;Y)=\I(U;Y)$, and expanding $\I(U,T;Y)$ the other way gives $\I(U;Y)-\I(T;Y)=\I(U;Y|T)$.
\end{proof}

\begin{theorem}[Pooled decoder]\label{thm:dec-crit}
Let $S\sim w$ index domains, with $(X,Y)\sim P_s$ and $T\sim C_s(\cdot|F(X))$ given $S=s$, and let $P$ be the induced law of $(S,T,Y)$. Let $g_{\rm pool}(\cdot|t)=P(Y{=}\cdot|T{=}t)$ be the pooled Bayes decoder and $g_s^\star(\cdot|t)=P_s(\cdot|t)$ the domain-specific Bayes decoder, so $\mathrm{CE}_s(g_s^\star)=H_s(Y|T)$. Then
\begin{enumerate}[label=(\roman*),leftmargin=*,itemsep=1pt]
\item $\sum_sw_s\big[\mathrm{CE}_s(g_{\rm pool})-\mathrm{CE}_s(g_s^\star)\big]=\sum_sw_s\E_{P_s(T)}\KL\big(P_s(Y|T)\|g_{\rm pool}(\cdot|T)\big)=\I(Y;S\mid T)$;
\item for every shared decoder $g$, $\sum_sw_s\mathrm{CE}_s(g)=\sum_sw_s\mathrm{CE}_s(g_s^\star)+\I(Y;S|T)+\E_{P(T)}\KL(P(Y|T)\|g(\cdot|T))$, so $g_{\rm pool}$ is the best shared decoder and no shared decoder has average excess below $\I(Y;S|T)$;
\item the following are equivalent: $\I(Y;S|T)=0$; $P_s(\cdot|t)=P(\cdot|t)$ for $P_s(T)$-a.e.\ $t$ and every $s$ with $w_s>0$; $g_{\rm pool}$ is Bayes-optimal on every domain; some single decoder is Bayes-optimal on every domain.
\end{enumerate}
\end{theorem}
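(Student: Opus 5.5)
The plan is to work throughout under the joint law $P$ of $(S,T,Y)$, whose conditionals given $S=s$ are $P_s(T,Y)$. Everything then follows from the chain rule for conditional cross-entropy, applied once per domain and once to the pooled law.

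\textbf{Step 1 (identity (ii) for arbitrary $g$).} For each $s$, condition on $T$ inside $\mathrm{CE}_s(g)$, exactly as in the proof of Theorem~\ref{thm:dec-id}. This gives
\[
\mathrm{CE}_s(g)=H_s(Y|T)+\E_{P_s(T)}\KL\big(P_s(Y|T)\,\|\,g(\cdot|T)\big).
\]
Since $g_s^\star=P_s(\cdot|t)$ gives $\mathrm{CE}_s(g_s^\star)=H_s(Y|T)$, the first term is $\mathrm{CE}_s(g_s^\star)$. Average over $w$ and use $\sum_s w_s P_s(t)=P(t)$, so $P(s|t)=w_sP_s(t)/P(t)$.

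The key step is a pointwise decomposition for each fixed $t$. Write $P(\cdot|t)=\sum_s P(s|t)P_s(\cdot|t)$, so $P(\cdot|t)$ is the $P(s|t)$-mixture of the $P_s(\cdot|t)$. The compensation (``golden'') identity for KL then reads
\[
\sum_s P(s|t)\KL\big(P_s(\cdot|t)\|g(\cdot|t)\big)=\sum_s P(s|t)\KL\big(P_s(\cdot|t)\|P(\cdot|t)\big)+\KL\big(P(\cdot|t)\|g(\cdot|t)\big).
\]
Take $\E_{P(T)}$. The first term on the right integrates to $\I(Y;S|T)$ by the definition of conditional mutual information. This yields (ii).

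\textbf{Step 2 (specialize to (i)).} Setting $g=g_{\rm pool}$ kills the last KL in (ii). What remains gives both equalities in (i): the middle expression of (i) is the left side of the compensation identity with $g=g_{\rm pool}$, averaged. From (ii), every shared $g$ then has average excess at least $\I(Y;S|T)$, with equality iff $g=P(\cdot|T)$ a.e.

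\textbf{Step 3 (equivalences in (iii)).} The first two conditions are equivalent because $\I(Y;S|T)=\sum_s w_s\E_{P_s(T)}\KL(P_s(Y|T)\|P(Y|T))$. This is a weighted sum of nonnegative terms, so it vanishes iff each term with $w_s>0$ vanishes, i.e.\ iff $P_s(\cdot|t)=P(\cdot|t)$ for $P_s(T)$-a.e.\ $t$. That condition says $g_{\rm pool}$ coincides with every $g_s^\star$, so $g_{\rm pool}$ is Bayes-optimal on each domain. Trivially, this implies that some single decoder is Bayes-optimal on every domain. Conversely, suppose one $g$ satisfies $\mathrm{CE}_s(g)=H_s(Y|T)$ for all $s$. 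Then the left side of (ii) equals $\sum_s w_s\mathrm{CE}_s(g_s^\star)$, which forces $\I(Y;S|T)=0$. This closes the cycle.

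\textbf{Main obstacle.} The technical care lies in measure-theoretic and infinite-value issues. The compensation identity must hold with $+\infty$ allowed, which is safe since all terms are nonnegative. The decomposition $\sum_s P(s|t)\,\cdot$ must be justified for general $T$ via regular conditional probabilities, and subtractions should only be performed when the $H_s(Y|T)\le\log K$ are finite, which holds because $\cY$ is finite. Here the support condition matters: $P_s(T)\ll P(T)$ whenever $w_s>0$, so ``$P(T)$-a.e.'' transfers to ``$P_s(T)$-a.e.''
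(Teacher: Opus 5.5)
Your proof is correct, and it differs from the paper's mainly in the order of the argument and in which identity does the work. The paper proves (i) directly, from the per-domain decomposition with $g=g_{\rm pool}$ and $\I(Y;S|T{=}t)=\sum_sP(s|t)\KL(P_s(Y|t)\|P(Y|t))$. It proves (ii) by pooling first: $\sum_sw_s\mathrm{CE}_s(g)=\E_P[-\log g(Y|T)]=H(Y|T)+\E_{P(T)}\KL(P(Y|T)\|g)$, followed by the entropy chain rule $H(Y|T)=H(Y|T,S)+\I(Y;S|T)$.

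You instead decompose per domain first and merge the KL terms pointwise in $t$ with the compensation identity. You then read off (i) as the special case $g=g_{\rm pool}$. The two computations are algebraically the same: at fixed $t$, the compensation identity is exactly pooled cross-entropy minus averaged entropy. Your version only manipulates nonnegative KL terms, so values of $+\infty$ need no separate care. The paper's version is shorter but subtracts entropies, which is harmless here because $\cY$ is finite.

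For the last implication of (iii), the paper argues pointwise. A common Bayes decoder must equal each $P_s(\cdot|t)$ off a $P_s(T)$-null set, so for $P(T)$-a.e.\ $t$ it equals their mixture $P(\cdot|t)$. You instead substitute $\mathrm{CE}_s(g)=H_s(Y|T)$ into (ii) and use finiteness of $\sum_sw_sH_s(Y|T)$ to force $\I(Y;S|T)+\E_{P(T)}\KL(P(Y|T)\|g(\cdot|T))=0$. This avoids tracking which $s$ have $P(s|t)>0$, and it also yields $g=g_{\rm pool}$ $P(T)$-a.e.\ as a byproduct.
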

\begin{proof}
The first equality of (i) is the decomposition $\mathrm{CE}_s(g)=H_s(Y|T)+\E_{P_s(T)}\KL(P_s(Y|T)\|g(\cdot|T))$ applied with $g=g_{\rm pool}$. For the second, $\I(Y;S|T=t)=\sum_sP(s|t)\KL(P_s(Y|t)\|P(Y|t))$, and averaging over $t\sim P(T)$ gives $\sum_sw_s\E_{P_s(T)}\KL(P_s(Y|T)\|g_{\rm pool}(\cdot|T))=\I(Y;S|T)$. (ii) $\sum_sw_s\mathrm{CE}_s(g)=\E[-\log g(Y|T)]=H(Y|T)+\E_{P(T)}\KL(P(Y|T)\|g)$, and $H(Y|T)=H(Y|T,S)+\I(Y;S|T)$ with $H(Y|T,S)=\sum_sw_sH_s(Y|T)=\sum_sw_s\mathrm{CE}_s(g_s^\star)$. (iii) The first three statements are equivalent by (i) and $\KL\ge0$ with equality iff the arguments agree. If a single $g$ is Bayes-optimal on every $s$, then $g(\cdot|t)=P_s(\cdot|t)$ for $P_s(T)$-a.e.\ $t$, for every $s$; hence for $P$-a.e.\ $t$ all $s$ with $P(s|t)>0$ share the same conditional, which then equals $P(\cdot|t)$.
\end{proof}

\begin{proposition}[Relation to the alignment term of Proposition~\ref{prop:pooled}]\label{prop:dec-bound}
Assume $F$ and the compressor do not depend on $s$, so $T\perp(Y,S)\mid U$. Then
\(
\I(Y;S\mid T)\ \le\ \I(Y;S\mid F(X))+\big[\I(U;S)-\I(T;S)\big],\, \I(U;S)-\I(T;S)\ge0.
\)
\end{proposition}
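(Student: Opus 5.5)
The plan is a chain-rule argument on the joint law of $(S,U,T,Y)$ under the pooled distribution $P$, using only that $T\perp(Y,S)\mid U$. The nonnegativity claim $\I(U;S)-\I(T;S)\ge0$ is immediate. The Markov chain $S-U-T$ (a consequence of $T\perp(Y,S)\mid U$) gives $\I(T;S)\le\I(U;S)$ by data processing. For the main inequality I will expand $\I(Y;S\mid T)$ against the pair $(U,T)$ and then strip off $T$ using conditional independence.

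First, by the chain rule,
\[
\I(Y;S\mid T)\le \I(Y,U;S\mid T)=\I(U;S\mid T)+\I(Y;S\mid U,T).
\]
Second, from $T\perp(Y,S)\mid U$ I get $\I(Y;S\mid U,T)=\I(Y;S\mid U)$. Indeed, conditioning on $U$, the joint law of $(Y,S,T)$ factorizes as $P(y,s|u)P(t|u)$, so adding $T$ to the conditioning does not change the conditional law of $(Y,S)$. Third, since $S-U-T$, $\I(S;U,T)=\I(S;U)$. Expanding the other way gives
\[
\I(U;S\mid T)=\I(S;U,T)-\I(S;T)=\I(U;S)-\I(T;S).
\]
Combining the three displays yields $\I(Y;S\mid T)\le \I(Y;S\mid F(X))+[\I(U;S)-\I(T;S)]$, since $U=F(X)$.

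The main obstacle is measure-theoretic care rather than the algebra. $U$ and $T$ may be continuous, so the chain-rule identities must be read as identities of (possibly infinite) conditional mutual informations on standard Borel spaces. I would state that all terms are finite whenever the right-hand side is finite: $\I(Y;S\mid\cdot)\le H(Y)<\infty$ for finite $\cY$, and $\I(U;S)\le H(S)$ is finite for finitely many domains. This makes every subtraction legitimate. I would also check that the hypothesis gives the joint conditional independence $T\perp(Y,S)\mid U$, not merely $T\perp Y\mid U$ and $T\perp S\mid U$ separately. This holds because the compressor is shared and $T$ is drawn from $C(\cdot\mid U)$ with fresh randomness, independent of $(X,Y,S)$.
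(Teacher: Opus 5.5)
Your proposal is correct and follows the paper's proof essentially step for step. You bound $\I(Y;S\mid T)$ by $\I(Y,U;S\mid T)=\I(U;S\mid T)+\I(Y;S\mid U,T)$, use $T\perp(Y,S)\mid U$ to replace $\I(Y;S\mid U,T)$ by $\I(Y;S\mid U)$, and use $S-U-T$ to get $\I(U;S\mid T)=\I(U;S)-\I(T;S)\ge0$. Your finiteness remarks are a harmless addition that the paper leaves implicit.
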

\begin{proof}
$\I(Y;S|T)\le\I(Y,U;S|T)=\I(U;S|T)+\I(Y;S|U,T)$. Since $T\perp(Y,S)|U$, $\I(Y;S|U,T)=\I(Y;S|U)$ and $\I(T;S|U)=0$, so $\I(U;S|T)=\I(U,T;S)-\I(T;S)=\I(U;S)-\I(T;S)\ge0$ (data processing).
\end{proof}

\begin{figure*}[t]
\centering
\resizebox{0.9\textwidth}{!}{%
\begin{tikzpicture}[>=Stealth,
  box/.style={draw,rounded corners,align=center,minimum height=15mm,minimum width=15mm,inner sep=3pt},
  train/.style={box,fill=orange!25,very thick},
  frozen/.style={box,fill=blue!10,dashed,draw=blue!60!black},
  grad/.style={->,red!70!black,dashed,thick},
  panel/.style={draw=black!35,rounded corners,inner sep=8pt}]
% ================= STAGE 1: F + linear head, cross-entropy only
\node[box]   (x1)  at (0,0)    {$X$};
\node[train] (F1)  at (2.8,0)  {$F_\theta$\\[-1pt]{\scriptsize\bfseries training}};
\node[train] (hd1) at (6.5,0)  {affine head $\ell_\vartheta$\\[-1pt]{\scriptsize\bfseries training, kept}};
\node[box]   (sm1) at (9.4,0)  {softmax};
\node[box]   (ce1) at (12.4,0) {$\mathrm{CE}$\\[-1pt]{\scriptsize no rate term, no $\lambda$}};
\node[box]   (y1)  at (15.0,0) {$Y$};
\draw[->] (x1)--(F1);
\draw[->] (F1)--node[above]{$V\in\R^{32}$}(hd1);
\draw[->] (hd1)--(sm1);
\draw[->] (sm1)--(ce1);
\draw[->] (y1)--(ce1);
\draw[grad] (ce1.north) -- ++(0,0.9) -| (hd1.north);
\draw[grad] ($(ce1.north)+(0,0.9)$) -| (F1.north);
\node[panel,fit=(x1)(F1)(hd1)(sm1)(ce1)(y1),
  label={[anchor=south west,font=\small\bfseries]north west:Stage 1}] (p1) {};
% ================= STAGE 2: one independent (C_j,G_j) per budget
\node[box]    (x2)  at (0,-5.2)    {$X$};
\node[frozen] (F2)  at (2.8,-5.2)  {$F_\theta$\\[-1pt]{\scriptsize\bfseries frozen}};
\node[frozen] (hd2) at (6.5,-5.2)  {head $\ell_\vartheta$\\[-1pt]{\scriptsize\bfseries frozen}};
\node[frozen] (Z)   at (10.8,-5.2)  {$U{:=}Z=\mathrm{softmax}(\ell_\vartheta(V))\in\R^{10}$};
\node[train]  (C)   at (15,-5.2) {$C_j$\\[-1pt]{\scriptsize\bfseries training}};
\node[train]  (G)   at (17.2,-5.2) {$G_j$\\[-1pt]{\scriptsize\bfseries training}};
\node[box]    (ls)  at (19.8,-5.2) {$\mathrm{CE}+\lambda_jR_{\rm agg}$};
\node[box]    (y2)  at (22.3,-5.2) {$Y$};
\draw[->] (x2)--(F2);
\draw[->] (F2)--(hd2);
\draw[->] (hd2)--(Z);
\draw[->] (Z)--node[above]{$Z$}(C);
\draw[->] (C)--node[above] {$T$}(G);
\draw[->] (C)--node[above, yshift=28pt] {$T=\mu+\sigma\odot\zeta$}(G);
\draw[->] (G)--(ls);
\draw[->] (y2)--(ls);
\draw[grad] (ls.south) -- ++(0,-0.9) -| (G.south);
\draw[grad] ($(ls.south)+(0,-0.9)$) -| (C.south);
\node[panel,fit=(x2)(F2)(hd2)(Z)(C)(G)(ls)(y2),
  label={[anchor=south west,font=\small\bfseries]north west:Stage 2 (repeated independently for $j=1,\dots,J$)}] (p2) {};

% ================= link between stages
\draw[->,thick,dotted] (F1.south) -- node[right,align=left]{copy weights, freeze} (F2.north);
\draw[->,thick,dotted] (hd1.south) -- (hd2.north);
\end{tikzpicture}}
\caption{\textbf{Stage~1:} $F_\theta$ and an affine head $\ell_\vartheta$ are trained with cross-entropy only (no rate term, no $\lambda$). Both are then frozen, and the head is \emph{kept}. \textbf{Stage~2:} the ten-dimensional softmax output $Z$ ($=U$) is cached for each training image. For each budget $\lambda_j$, an independent compressor--decoder pair $(C_j,G_j)$ is trained on $\mathrm{CE}+\lambda_jR_{\rm agg}$ with $\lambda_j$ fixed and never fed to the networks. A new budget therefore costs one additional $(C,G)$ fit. Orange boxes denote trainable components, blue dashed boxes denote frozen/cached components, and dashed red arrows indicate gradient flow. }
\label{fig:stages}
\end{figure*}

\begin{algorithm}[t]
\caption{Separated pipeline (E1, E2)}\label{alg:sep}
\begin{algorithmic}[1]
\Require training pool $\cD$ (with validation split); budgets $\lambda_1,\dots,\lambda_J$
\Ensure frozen $F_{\rm int}$ and pairs $(C_j,G_j)_{j=1}^{J}$
\Statex \textbf{Stage 1: representation (no $\lambda$)}
\State Train $F_\theta$ and affine head $\ell_\vartheta$ with cross-entropy only; freeze both
\State $U \leftarrow \mathrm{softmax}(\ell_\vartheta(F_\theta(X)))$ \Comment{cached once, reused for all budgets}
\Statex \textbf{Stage 2: rate control (one pair per budget)}
\For{$j=1,\dots,J$}
  \State Initialize $(C_j,G_j)$
  \Repeat
    \State $T \leftarrow \mu+\sigma\odot\zeta$, $\;(\mu,\sigma)=C_j(U)$, $\;\zeta\sim\mathcal N(0,\Id)$
    \State Update $(C_j,G_j)$ on $\mathrm{CE}(G_j(T),Y)+\lambda_j R_{\rm agg}$
  \Until{best validation loss}
  \State Fit and freeze the aggregate prior $(\bar\mu_j,\sigma_{T,j}^2)$ for evaluation
\EndFor
\State \textbf{Evaluate} each $(C_j,G_j)$ on test data: accuracy, CE, $R_{\rm agg}$, $\mathcal L_{\lambda_j}$, ECE
\end{algorithmic}
\end{algorithm}

\begin{table*}[t]
\centering
\scriptsize
\setlength{\tabcolsep}{4pt}
\caption{Architecture and computational cost over the full grid of
eight prespecified $\lambda$ values. MACs and training times are averaged over 10 seeds from the reported run. FC stands for fully connected (linear) layer.}
\label{tab:architecture_cost}
\resizebox{\textwidth}{!}{%
\begin{tabular}{@{}llp{6.1cm}rr@{}}
\toprule
Experiments & Architecture &
Network structure and rate dependence &
Total training MACs &
Training time (s) \\
\midrule

E1 / E2 &
Separated FCG &
E1 trains one CNN $F$ per dataset; E2 trains one CNN
on the pooled data. Each trained representation is reused
across all eight budgets. For each $\lambda$, the compressor
$C$: $10\!\to\!128\!\to\!128\!\to\!64$ (3 FC layers),
and decoder $G$: $32\!\to\!128\!\to\!10$ (2 FC layers)
are trained separately. Each CNN has 3 Conv layers,
2 FC layers, and a 10-class linear head. &
$3.039\!\times\!10^{13}$ /
$3.039\!\times\!10^{13}$ &
141.3 / 138.2 \\

\midrule
E3 / E4 &
Independent VIB &
Per model: CNN encoder (3 Conv + 2 FC, outputting $\mu$ and variance)
and decoder ($32\!\to\!128\!\to\!10$, 2 FC).
A separate end-to-end model is trained for each $\lambda$. &
$2.347\!\times\!10^{14}$ /
$2.347\!\times\!10^{14}$ &
292.9 / 294.1 \\

\midrule
E5 / E6 &
FiLM-conditioned VIB &
One rate-conditioned network: 3 Conv, 2 encoder FC,
4 FiLM FC, 2 rate-embedding FC, and 2 decoder FC.
The network is conditioned on $\lambda$ throughout. &
$2.355\!\times\!10^{14}$ /
$2.355\!\times\!10^{14}$ &
487.3 / 489.3 \\

\midrule
E7 / E8 &
OIB &
DFT input (3264 real coordinates); pretrained MLP
with 3 FC layers ($3264\!\to\!128\!\to\!128\!\to\!10$);
closed-form Gaussian-IB compressor; one 2-FC decoder per
budget ($n_z\!\to\!128\!\to\!10$).
The DFT itself is excluded from the MAC count. &
$1.830\!\times\!10^{12}$ /
$1.745\!\times\!10^{12}$ &
76.4 / 73.9 \\

\bottomrule
\end{tabular}%
}

\end{table*}

\begin{table}[t]
\centering\small
\caption{Test metrics averaged over the eight prespecified budgets (best-validation-epoch weights); mean $\pm$ 95\% CI half-width over 10 seeds, all rows from the same run. Here $\overline{\mathcal L}:=\frac18\sum_{j=1}^{8}\big(\mathrm{CE}_j+\lambda_jR_{{\rm agg},j}\big)$, for 8 given \(\beta\), is the budget-averaged surrogate Lagrangian over the eight budgets, with $\mathrm{CE}_j,R_{{\rm agg},j}$ the test values at $\lambda_j$. ``macro'': unweighted mean over the four datasets; ``pooled test'': size-weighted pooled test set (pooled experiments only: E2, E4, E5, E8). E5/E6: one FiLM rate-conditioned VIB (step-matched to E3); E7/E8: OIB, bank-of-NN.}
\label{tab:main}
\resizebox{\textwidth}{!}{%
\begin{tabular}{@{}llccccc@{}}\toprule
Experiment & Aggregation & Acc (\%) & CE (nats) & $R_{\rm agg}$ (nats) & $\overline{\mathcal L}$ & ECE\\\midrule
E1 separated, per domain & macro & $65.59\pm0.49$ & $1.1709\pm0.0078$ & $6.18\pm0.03$ & $1.4874\pm0.0064$ & $0.0862\pm0.0034$\\
E3 VIB, per domain & macro & $57.30\pm0.23$ & $1.1806\pm0.0057$ & $6.96\pm0.03$ & $1.4139\pm0.0043$ & $0.0511\pm0.0015$\\
E6 FiLM-conditioned VIB, per domain & macro & $60.86\pm0.73$ & $1.1227\pm0.0157$ & $4.81\pm0.18$ & $1.4126\pm0.0121$ & $0.0697\pm0.0022$\\
E7 OIB, per domain & macro & $42.72\pm0.39$ & $1.7545\pm0.0057$ & $127.78\pm0.49$ & $6.6694\pm0.0391$ & $0.0876\pm0.0045$\\
\midrule
E2 separated, pooled & macro & $66.64\pm0.75$ & $1.1461\pm0.0135$ & $5.37\pm0.08$ & $1.4646\pm0.0175$ & $0.1003\pm0.0054$\\
E4 VIB, pooled & macro & $56.87\pm0.91$ & $1.2472\pm0.0164$ & $7.22\pm0.12$ & $1.4790\pm0.0065$ & $0.0569\pm0.0026$\\
E5 FiLM-conditioned VIB, pooled & macro & $60.67\pm0.87$ & $1.1778\pm0.0180$ & $5.52\pm0.18$ & $1.4840\pm0.0120$ & $0.0939\pm0.0042$\\
E8 OIB, pooled & macro & $38.98\pm0.27$ & $1.7931\pm0.0045$ & $149.07\pm1.77$ & $7.9666\pm0.1027$ & $0.0824\pm0.0033$\\
\midrule
E2 separated, pooled & pooled test & $67.14\pm0.90$ & $1.1384\pm0.0148$ & $5.35\pm0.08$ & $1.4540\pm0.0168$ & $0.0804\pm0.0052$\\
E4 VIB, pooled & pooled test & $57.06\pm0.90$ & $1.2585\pm0.0170$ & $7.26\pm0.14$ & $1.4891\pm0.0073$ & $0.0357\pm0.0017$\\
E5 FiLM-conditioned VIB, pooled & pooled test & $60.51\pm0.93$ & $1.1969\pm0.0202$ & $5.55\pm0.19$ & $1.5007\pm0.0148$ & $0.0817\pm0.0036$\\
E8 OIB, pooled & pooled test & $37.61\pm0.38$ & $1.8314\pm0.0068$ & $138.13\pm1.68$ & $7.3842\pm0.0863$ & $0.0448\pm0.0018$\\
\bottomrule\end{tabular}}
\end{table}

\begin{figure*}[t]
\centering
\begin{tikzpicture}
\begin{groupplot}[group style={group size=3 by 1,horizontal sep=1.6cm},width=0.35\textwidth,height=0.30\textwidth,xmode=log,xmin=0.0035,xmax=0.7,xtick={0.005,0.01,0.02,0.05,0.1,0.2,0.5},xticklabels={0.005,,0.02,0.05,0.1,0.2,0.5},xlabel={$\lambda$},grid=major,grid style={gray!25},tick label style={font=\scriptsize},label style={font=\small},title style={font=\small},legend style={font=\scriptsize,legend columns=8,draw=none}]
\nextgroupplot[ylabel={test accuracy (\%)},ymin=0,ymax=80,title={(a) accuracy},legend to name=legfig1]
\addplot[blue!70!black,thick,mark=o,mark size=1.5pt,error bars/.cd,y dir=both,y explicit,error bar style={line width=0.4pt}] coordinates {(0.005,71.3564) +- (0,0.3169) (0.00965349,71.0652) +- (0,0.3035) (0.018638,70.6709) +- (0,0.3493) (0.0359843,69.7216) +- (0,0.7062) (0.0694748,68.4864) +- (0,0.3482) (0.134135,65.8082) +- (0,0.4577) (0.258974,60.1449) +- (0,0.9058) (0.5,47.4542) +- (0,1.1690)};
\addlegendentry{E1}
\addplot[orange!90!black,thick,mark=square,mark size=1.5pt,error bars/.cd,y dir=both,y explicit,error bar style={line width=0.4pt}] coordinates {(0.005,70.4192) +- (0,0.5126) (0.00965349,70.3329) +- (0,0.5438) (0.018638,70.2126) +- (0,0.5217) (0.0359843,69.8742) +- (0,0.5727) (0.0694748,69.1157) +- (0,0.5712) (0.134135,67.4942) +- (0,0.6790) (0.258974,63.7996) +- (0,0.9726) (0.5,51.9112) +- (0,2.0202)};
\addlegendentry{E2}
\addplot[green!70!black,thick,mark=triangle,dashed,mark size=1.5pt,error bars/.cd,y dir=both,y explicit,error bar style={line width=0.4pt}] coordinates {(0.005,69.7565) +- (0,0.2502) (0.00965349,68.7125) +- (0,0.5021) (0.018638,67.6314) +- (0,0.4010) (0.0359843,65.7857) +- (0,0.4777) (0.0694748,62.0407) +- (0,0.6997) (0.134135,54.2651) +- (0,1.0942) (0.258974,42.5267) +- (0,0.9117) (0.5,27.7101) +- (0,1.0342)};
\addlegendentry{E3}
\addplot[red!70!black,thick,mark=diamond,dashed,mark size=1.5pt,error bars/.cd,y dir=both,y explicit,error bar style={line width=0.4pt}] coordinates {(0.005,68.7027) +- (0,0.3847) (0.00965349,68.3685) +- (0,0.4502) (0.018638,68.0247) +- (0,0.3884) (0.0359843,66.7309) +- (0,0.5252) (0.0694748,65.4432) +- (0,0.3790) (0.134135,61.1179) +- (0,0.8227) (0.258974,43.9369) +- (0,7.0569) (0.5,12.6310) +- (0,0.1220)};
\addlegendentry{E4}
\addplot[brown!80!black,thick,mark=pentagon,dashed, mark size=1.5pt,error bars/.cd,y dir=both,y explicit,error bar style={line width=0.4pt}] coordinates {(0.005,68.0224) +- (0,0.5951) (0.00965349,67.8420) +- (0,0.6108) (0.018638,67.3598) +- (0,0.5941) (0.0359843,66.4006) +- (0,0.6090) (0.0694748,64.5117) +- (0,0.7827) (0.134135,60.8715) +- (0,1.0737) (0.258974,52.9560) +- (0,1.7345) (0.5,37.3979) +- (0,1.9979)};
\addlegendentry{E5}
\addplot[violet!80!black,thick,mark=star,dashed,mark size=1.5pt,error bars/.cd,y dir=both,y explicit,error bar style={line width=0.4pt}] coordinates {(0.005,66.9099) +- (0,0.7516) (0.00965349,66.5957) +- (0,0.7661) (0.018638,66.0354) +- (0,0.7397) (0.0359843,65.0764) +- (0,0.7565) (0.0694748,63.2343) +- (0,0.7475) (0.134135,59.8839) +- (0,0.7818) (0.258974,54.0104) +- (0,0.8900) (0.5,45.1499) +- (0,0.8875)};
\addlegendentry{E6}
\addplot[teal!80!black,thick,mark=x,dashed,mark size=1.5pt,error bars/.cd,y dir=both,y explicit,error bar style={line width=0.4pt}] coordinates {(0.005,43.4382) +- (0,0.2633) (0.00965349,43.4200) +- (0,0.4806) (0.018638,43.7873) +- (0,0.4650) (0.0359843,43.7748) +- (0,0.4586) (0.0694748,43.4401) +- (0,0.4244) (0.134135,42.9711) +- (0,0.5063) (0.258974,41.9164) +- (0,0.5896) (0.5,39.0357) +- (0,0.5891)};
\addlegendentry{E7}
\addplot[gray,thick,mark=+,dashed,mark size=1.5pt,error bars/.cd,y dir=both,y explicit,error bar style={line width=0.4pt}] coordinates {(0.005,40.0599) +- (0,0.5010) (0.00965349,40.1103) +- (0,0.4978) (0.018638,40.4152) +- (0,0.4104) (0.0359843,40.0160) +- (0,0.4416) (0.0694748,39.9553) +- (0,0.4272) (0.134135,39.7101) +- (0,0.3628) (0.258974,38.0825) +- (0,0.1642) (0.5,33.4853) +- (0,0.2145)};
\addlegendentry{E8}
\nextgroupplot[ylabel={test CE (nats)},ymin=0.8,ymax=2.5,title={(b) cross-entropy}]
\addplot[blue!70!black,thick,mark=o,mark size=1.5pt,error bars/.cd,y dir=both,y explicit,error bar style={line width=0.4pt}] coordinates {(0.005,1.0242) +- (0,0.0111) (0.00965349,1.0439) +- (0,0.0090) (0.018638,1.0661) +- (0,0.0082) (0.0359843,1.0992) +- (0,0.0078) (0.0694748,1.1381) +- (0,0.0062) (0.134135,1.1917) +- (0,0.0060) (0.258974,1.2943) +- (0,0.0150) (0.5,1.5099) +- (0,0.0236)};
\addplot[orange!90!black,thick,mark=square,mark size=1.5pt,error bars/.cd,y dir=both,y explicit,error bar style={line width=0.4pt}] coordinates {(0.005,1.0101) +- (0,0.0184) (0.00965349,1.0257) +- (0,0.0197) (0.018638,1.0410) +- (0,0.0168) (0.0359843,1.0656) +- (0,0.0166) (0.0694748,1.1071) +- (0,0.0170) (0.134135,1.1628) +- (0,0.0153) (0.258974,1.2594) +- (0,0.0152) (0.5,1.4968) +- (0,0.0284)};
\addplot[green!50!black,thick,mark=triangle,dashed,mark size=1.5pt,error bars/.cd,y dir=both,y explicit,error bar style={line width=0.4pt}] coordinates {(0.005,0.9014) +- (0,0.0068) (0.00965349,0.9161) +- (0,0.0124) (0.018638,0.9433) +- (0,0.0098) (0.0359843,0.9851) +- (0,0.0119) (0.0694748,1.0695) +- (0,0.0192) (0.134135,1.2324) +- (0,0.0219) (0.258974,1.4904) +- (0,0.0213) (0.5,1.9063) +- (0,0.0327)};
\addplot[red!70!black,thick,mark=diamond,dashed,mark size=1.5pt,error bars/.cd,y dir=both,y explicit,error bar style={line width=0.4pt}] coordinates {(0.005,0.9377) +- (0,0.0136) (0.00965349,0.9509) +- (0,0.0108) (0.018638,0.9728) +- (0,0.0127) (0.0359843,1.0090) +- (0,0.0129) (0.0694748,1.0654) +- (0,0.0148) (0.134135,1.1839) +- (0,0.0220) (0.258974,1.5612) +- (0,0.1565) (0.5,2.2970) +- (0,0.0004)};
\addplot[violet,thick,mark=star,densely dotted,mark size=1.5pt,error bars/.cd,y dir=both,y explicit,error bar style={line width=0.4pt}] coordinates {(0.005,1.0230) +- (0,0.0182) (0.00965349,1.0101) +- (0,0.0186) (0.018638,1.0053) +- (0,0.0183) (0.0359843,1.0136) +- (0,0.0187) (0.0694748,1.0466) +- (0,0.0180) (0.134135,1.1201) +- (0,0.0174) (0.258974,1.2626) +- (0,0.0199) (0.5,1.5005) +- (0,0.0216)};
\addplot[teal,thick,mark=x,dotted,mark size=1.5pt,error bars/.cd,y dir=both,y explicit,error bar style={line width=0.4pt}] coordinates {(0.005,1.9082) +- (0,0.0164) (0.00965349,1.8262) +- (0,0.0144) (0.018638,1.7731) +- (0,0.0122) (0.0359843,1.7354) +- (0,0.0102) (0.0694748,1.6987) +- (0,0.0053) (0.134135,1.6833) +- (0,0.0059) (0.258974,1.6852) +- (0,0.0114) (0.5,1.7255) +- (0,0.0117)};
\addplot[gray,thick,mark=+,dotted,mark size=1.5pt,error bars/.cd,y dir=both,y explicit,error bar style={line width=0.4pt}] coordinates {(0.005,1.8275) +- (0,0.0099) (0.00965349,1.8017) +- (0,0.0048) (0.018638,1.7871) +- (0,0.0057) (0.0359843,1.7690) +- (0,0.0061) (0.0694748,1.7697) +- (0,0.0105) (0.134135,1.7663) +- (0,0.0049) (0.258974,1.7723) +- (0,0.0028) (0.5,1.8509) +- (0,0.0050)};
\addplot[brown!80!black,thick,mark=pentagon,densely dotted,mark size=1.5pt,error bars/.cd,y dir=both,y explicit,error bar style={line width=0.4pt}] coordinates {(0.005,1.0034) +- (0,0.0215) (0.00965349,0.9983) +- (0,0.0163) (0.018638,1.0064) +- (0,0.0141) (0.0359843,1.0282) +- (0,0.0141) (0.0694748,1.0757) +- (0,0.0163) (0.134135,1.1710) +- (0,0.0256) (0.258974,1.3713) +- (0,0.0444) (0.5,1.7683) +- (0,0.0537)};
\nextgroupplot[ylabel={$R_{\rm agg}$ (nats)},ymin=0,ymax=21,title={(c) surrogate rate (OIB omitted)}]
\addplot[blue!70!black,thick,mark=o,mark size=1.5pt,error bars/.cd,y dir=both,y explicit,error bar style={line width=0.4pt}] coordinates {(0.005,13.6373) +- (0,0.1270) (0.00965349,10.4435) +- (0,0.1525) (0.018638,7.9874) +- (0,0.1471) (0.0359843,5.9164) +- (0,0.0674) (0.0694748,4.5375) +- (0,0.0747) (0.134135,3.3286) +- (0,0.0450) (0.258974,2.2965) +- (0,0.0489) (0.5,1.2900) +- (0,0.0478)};
\addplot[orange!90!black,thick,mark=square,mark size=1.5pt,error bars/.cd,y dir=both,y explicit,error bar style={line width=0.4pt}] coordinates {(0.005,11.6959) +- (0,0.3741) (0.00965349,8.4340) +- (0,0.3115) (0.018638,6.5282) +- (0,0.2401) (0.0359843,5.0590) +- (0,0.0684) (0.0694748,4.0902) +- (0,0.0615) (0.134135,3.2321) +- (0,0.0335) (0.258974,2.4396) +- (0,0.0561) (0.5,1.5100) +- (0,0.0834)};
\addplot[green!50!black,thick,mark=triangle,dashed,mark size=1.5pt,error bars/.cd,y dir=both,y explicit,error bar style={line width=0.4pt}] coordinates {(0.005,18.9994) +- (0,0.1977) (0.00965349,12.9007) +- (0,0.1989) (0.018638,8.9067) +- (0,0.1177) (0.0359843,6.1733) +- (0,0.1045) (0.0694748,4.1570) +- (0,0.0702) (0.134135,2.5902) +- (0,0.0878) (0.258974,1.4168) +- (0,0.0481) (0.5,0.5117) +- (0,0.0436)};
\addplot[red!70!black,thick,mark=diamond,dashed,mark size=1.5pt,error bars/.cd,y dir=both,y explicit,error bar style={line width=0.4pt}] coordinates {(0.005,19.2337) +- (0,0.7867) (0.00965349,12.9218) +- (0,0.3894) (0.018638,9.2573) +- (0,0.3037) (0.0359843,6.5548) +- (0,0.1501) (0.0694748,4.6844) +- (0,0.0945) (0.134135,3.3820) +- (0,0.0764) (0.258974,1.7217) +- (0,0.3673) (0.5,0.0002) +- (0,0.0001)};
\addplot[violet,thick,mark=star,densely dotted,mark size=1.5pt,error bars/.cd,y dir=both,y explicit,error bar style={line width=0.4pt}] coordinates {(0.005,8.5557) +- (0,0.4669) (0.00965349,7.5767) +- (0,0.3731) (0.018638,6.4640) +- (0,0.2764) (0.0359843,5.2907) +- (0,0.1977) (0.0694748,4.1188) +- (0,0.1346) (0.134135,3.0498) +- (0,0.0871) (0.258974,2.0887) +- (0,0.0695) (0.5,1.3123) +- (0,0.0515)};
\addplot[brown!80!black,thick,mark=pentagon,densely dotted,mark size=1.5pt,error bars/.cd,y dir=both,y explicit,error bar style={line width=0.4pt}] coordinates {(0.005,9.7198) +- (0,0.6021) (0.00965349,8.8319) +- (0,0.4183) (0.018638,7.5908) +- (0,0.3141) (0.0359843,6.1926) +- (0,0.2228) (0.0694748,4.8767) +- (0,0.1832) (0.134135,3.6157) +- (0,0.1658) (0.258974,2.3135) +- (0,0.1629) (0.5,1.0562) +- (0,0.1058)};
\end{groupplot}
\node[anchor=north] at ($(group c2r1.south)+(0,-1.1cm)$) {\ref{legfig1}};
\end{tikzpicture}
\caption{Four-dataset macro-average over 10 seeds (unweighted over MNIST, SVHN, CIFAR-10, STL-10) test metrics versus the prespecified budget grid ($\lambda$ is given). E1/E2: separated, per-domain/pooled; E3/E4: independent VIBs; E5/E6: one FiLM rate-conditioned VIB; E7/E8: OIB (bank-of-NN, own re-implementation). (a) Accuracy, (b) cross-entropy, (c) surrogate rate $R_{\rm agg}$; OIB has $R_{\rm agg}\approx 128$--$149$ nats (closed-form compressor) and is omitted from (c) for scale.}
\label{fig:results}
\end{figure*}

\begin{table}[t]
\centering\scriptsize\setlength{\tabcolsep}{2.6pt}
\caption{Per-dataset results at every $\lambda$ (MNIST, SVHN). Mean over 10 seeds of test accuracy (Acc, \%), test cross-entropy (CE, nats) and surrogate rate ($R=R_{\rm agg}$, nats) at each prespecified $\lambda$.}
\label{tab:app1}
\resizebox{0.8\textwidth}{!}{%
\begin{tabular}{@{}r ccc ccc ccc ccc@{}}\toprule
 & \multicolumn{3}{c}{E1 (sep., per dom.)} & \multicolumn{3}{c}{E2 (sep., pooled)} & \multicolumn{3}{c}{E3 (VIB, per dom.)} & \multicolumn{3}{c}{E4 (VIB, pooled)}\\
\cmidrule(lr){2-4}\cmidrule(lr){5-7}\cmidrule(lr){8-10}\cmidrule(lr){11-13}
$\lambda$ & Acc & CE & $R$ & Acc & CE & $R$ & Acc & CE & $R$ & Acc & CE & $R$\\\midrule
\multicolumn{13}{l}{\textbf{MNIST}}\\
0.0050 & \textbf{96.17} & 0.273 & 11.34 & 96.04 & 0.197 & 11.05 & 95.86 & 0.137 & 14.61 & 95.17 & 0.156 & 18.55\\
0.0097 & 95.93 & 0.295 & 8.85 & \textbf{96.02} & 0.205 & 7.88 & 95.64 & 0.147 & 10.81 & 95.01 & 0.166 & 11.65\\
0.0186 & 95.58 & 0.310 & 7.45 & \textbf{95.98} & 0.217 & 6.09 & 95.06 & 0.164 & 8.12 & 94.25 & 0.197 & 8.50\\
0.0360 & 94.94 & 0.342 & 5.99 & \textbf{95.83} & 0.232 & 4.92 & 94.51 & 0.183 & 6.16 & 93.28 & 0.233 & 6.60\\
0.0695 & 93.50 & 0.370 & 5.04 & \textbf{95.32} & 0.261 & 4.16 & 93.27 & 0.220 & 4.82 & 92.45 & 0.265 & 4.75\\
0.1341 & 91.14 & 0.412 & 4.03 & \textbf{94.14} & 0.307 & 3.49 & 91.12 & 0.284 & 3.79 & 88.30 & 0.398 & 3.71\\
0.2590 & 86.79 & 0.515 & 3.00 & \textbf{90.43} & 0.432 & 2.83 & 86.25 & 0.416 & 2.94 & 63.56 & 1.037 & 2.05\\
0.5000 & \textbf{75.81} & 0.782 & 2.08 & 74.19 & 0.828 & 1.86 & 67.73 & 0.876 & 1.89 & 11.30 & 2.304 & 0.00\\
\midrule
\multicolumn{13}{l}{\textbf{SVHN}}\\
0.0050 & \textbf{73.64} & 1.042 & 14.20 & 70.81 & 1.024 & 11.56 & 69.63 & 0.970 & 20.55 & 67.46 & 0.998 & 19.24\\
0.0097 & \textbf{73.48} & 1.067 & 10.47 & 70.71 & 1.041 & 8.52 & 67.76 & 0.996 & 14.21 & 67.75 & 1.005 & 13.31\\
0.0186 & \textbf{73.08} & 1.094 & 7.92 & 70.47 & 1.061 & 6.40 & 66.01 & 1.043 & 9.65 & 66.61 & 1.036 & 9.52\\
0.0360 & \textbf{72.44} & 1.130 & 6.05 & 69.96 & 1.098 & 5.04 & 63.99 & 1.094 & 6.44 & 65.11 & 1.086 & 6.66\\
0.0695 & \textbf{71.07} & 1.181 & 4.57 & 69.18 & 1.138 & 4.01 & 56.91 & 1.259 & 4.00 & 63.99 & 1.136 & 4.66\\
0.1341 & \textbf{68.30} & 1.247 & 3.36 & 67.30 & 1.203 & 3.16 & 45.89 & 1.529 & 2.16 & 58.42 & 1.303 & 3.28\\
0.2590 & 62.85 & 1.352 & 2.38 & \textbf{62.91} & 1.317 & 2.38 & 27.21 & 2.054 & 0.37 & 41.55 & 1.708 & 1.62\\
0.5000 & 48.80 & 1.605 & 1.25 & \textbf{51.75} & 1.545 & 1.47 & 19.54 & 2.227 & 0.00 & 19.40 & 2.267 & 0.00\\
\bottomrule\end{tabular}}
\end{table}

\begin{table}[t]
\centering\scriptsize\setlength{\tabcolsep}{2.6pt}
\caption{Per-dataset results at every $\lambda$ (CIFAR-10, STL-10, pooled test set). Mean over 10 seeds of test accuracy (Acc, \%), test cross-entropy (CE, nats) and surrogate rate ($R=R_{\rm agg}$, nats) at each prespecified $\lambda$.}
\label{tab:app2}
\resizebox{0.8\textwidth}{!}{%
\begin{tabular}{@{}r ccc ccc ccc ccc@{}}\toprule
 & \multicolumn{3}{c}{E1 (sep., per dom.)} & \multicolumn{3}{c}{E2 (sep., pooled)} & \multicolumn{3}{c}{E3 (VIB, per dom.)} & \multicolumn{3}{c}{E4 (VIB, pooled)}\\
\cmidrule(lr){2-4}\cmidrule(lr){5-7}\cmidrule(lr){8-10}\cmidrule(lr){11-13}
$\lambda$ & Acc & CE & $R$ & Acc & CE & $R$ & Acc & CE & $R$ & Acc & CE & $R$\\\midrule
\multicolumn{13}{l}{\textbf{CIFAR-10}}\\
0.0050 & 58.66 & 1.358 & 15.32 & \textbf{58.74} & 1.401 & 11.95 & 57.98 & 1.222 & 20.76 & 56.07 & 1.287 & 19.23\\
0.0097 & 58.32 & 1.384 & 11.26 & \textbf{58.67} & 1.421 & 8.80 & 56.20 & 1.246 & 13.67 & 57.29 & 1.299 & 13.41\\
0.0186 & 57.90 & 1.408 & 8.25 & \textbf{58.44} & 1.438 & 6.67 & 56.10 & 1.259 & 9.54 & 56.19 & 1.306 & 9.41\\
0.0360 & 57.11 & 1.445 & 6.09 & \textbf{58.07} & 1.466 & 5.15 & 54.19 & 1.306 & 6.45 & 55.17 & 1.335 & 6.66\\
0.0695 & 55.87 & 1.482 & 4.36 & \textbf{57.29} & 1.508 & 4.08 & 50.21 & 1.388 & 4.17 & 53.34 & 1.397 & 4.76\\
0.1341 & 52.96 & 1.540 & 3.04 & \textbf{55.60} & 1.565 & 3.17 & 41.89 & 1.556 & 2.41 & 48.94 & 1.540 & 3.24\\
0.2590 & 46.22 & 1.645 & 1.93 & \textbf{52.12} & 1.649 & 2.35 & 26.71 & 1.809 & 1.09 & 34.72 & 1.789 & 1.59\\
0.5000 & 29.05 & 1.935 & 0.73 & \textbf{43.33} & 1.781 & 1.45 & 10.29 & 2.304 & 0.00 & 10.00 & 2.308 & 0.00\\
\midrule
\multicolumn{13}{l}{\textbf{STL-10}}\\
0.0050 & \textbf{56.73} & 1.376 & 15.04 & 56.44 & 1.441 & 11.88 & 54.75 & 1.277 & 19.20 & 54.31 & 1.322 & 19.37\\
0.0097 & 56.35 & 1.398 & 10.97 & \textbf{56.42} & 1.457 & 8.78 & 55.07 & 1.269 & 12.78 & 55.27 & 1.338 & 13.32\\
0.0186 & 55.83 & 1.427 & 8.03 & \textbf{56.22} & 1.473 & 6.65 & 53.20 & 1.304 & 8.33 & 54.21 & 1.351 & 9.28\\
0.0360 & 54.84 & 1.463 & 5.84 & \textbf{55.95} & 1.500 & 5.13 & 50.57 & 1.355 & 5.43 & 53.14 & 1.378 & 6.53\\
0.0695 & 53.28 & 1.505 & 4.14 & \textbf{55.33} & 1.535 & 4.05 & 47.06 & 1.426 & 3.54 & 51.82 & 1.426 & 4.67\\
0.1341 & 50.28 & 1.550 & 2.83 & \textbf{53.73} & 1.593 & 3.15 & 37.34 & 1.566 & 1.92 & 48.19 & 1.536 & 3.20\\
0.2590 & 44.34 & 1.641 & 1.81 & \textbf{50.67} & 1.661 & 2.34 & 29.50 & 1.696 & 1.19 & 36.37 & 1.709 & 1.62\\
0.5000 & 31.75 & 1.833 & 0.89 & \textbf{43.06} & 1.768 & 1.47 & 13.02 & 2.223 & 0.12 & 10.01 & 2.308 & 0.00\\
\midrule
\multicolumn{13}{l}{\textbf{Pooled test set (E2, E4 only)}}\\
0.0050 & -- & -- & -- & \textbf{71.13} & 1.003 & 11.59 & -- & -- & -- & 68.53 & 0.943 & 19.13\\
0.0097 & -- & -- & -- & \textbf{71.04} & 1.018 & 8.49 & -- & -- & -- & 69.02 & 0.953 & 13.02\\
0.0186 & -- & -- & -- & \textbf{70.86} & 1.036 & 6.43 & -- & -- & -- & 67.97 & 0.977 & 9.27\\
0.0360 & -- & -- & -- & \textbf{70.49} & 1.065 & 5.05 & -- & -- & -- & 66.72 & 1.017 & 6.63\\
0.0695 & -- & -- & -- & \textbf{69.76} & 1.103 & 4.06 & -- & -- & -- & 65.49 & 1.065 & 4.69\\
0.1341 & -- & -- & -- & \textbf{68.08} & 1.161 & 3.22 & -- & -- & -- & 60.72 & 1.212 & 3.34\\
0.2590 & -- & -- & -- & \textbf{64.22} & 1.265 & 2.45 & -- & -- & -- & 43.61 & 1.600 & 1.70\\
0.5000 & -- & -- & -- & \textbf{53.07} & 1.488 & 1.54 & -- & -- & -- & 14.78 & 2.288 & 0.00\\
\bottomrule\end{tabular}}
\end{table}

\begin{figure}[t]
\centering
\begin{tikzpicture}
\begin{groupplot}[
  group style={group size=2 by 1, horizontal sep=1.6cm},
  width=0.46\textwidth, height=0.42\textwidth,
  grid=major, grid style={gray!25},
  tick label style={font=\scriptsize}, label style={font=\small},
  title style={font=\small}, legend style={font=\scriptsize}]

% ---------- (a) CE vs surrogate rate ----------
\nextgroupplot[xlabel={$R_{\rm agg}$ (nats)}, ylabel={test CE (nats)},
  xmin=0, xmax=20, ymin=0.8, ymax=2.4, title={(a)},
  legend style={font=\scriptsize,at={(0.97,0.97)},anchor=north east}]
\addplot[blue!70!black,thick,mark=o,mark size=1.8pt] coordinates {(13.9766,1.0124) (10.3885,1.0358) (7.9123,1.0596) (5.9921,1.0952) (4.5295,1.1344) (3.3154,1.1873) (2.2776,1.2882) (1.2359,1.5387)};
\addlegendentry{E1}
\addplot[orange!90!black,thick,mark=square,mark size=1.8pt] coordinates {(11.6123,1.0156) (8.4956,1.0308) (6.4543,1.0474) (5.0611,1.0741) (4.0745,1.1104) (3.2426,1.1668) (2.4742,1.2649) (1.5635,1.4804)};
\addlegendentry{E2}
\addplot[green!50!black,thick,mark=triangle,mark size=1.8pt,dashed] coordinates {(18.7831,0.9013) (12.8687,0.9145) (8.9090,0.9425) (6.1219,0.9844) (4.1317,1.0730) (2.5673,1.2339) (1.3985,1.4935) (0.5050,1.9076)};
\addlegendentry{E3}
\addplot[red!70!black,thick,mark=diamond,mark size=1.8pt,dashed] coordinates {(19.0982,0.9406) (12.9221,0.9520) (9.1755,0.9726) (6.6116,1.0080) (4.7084,1.0559) (3.3565,1.1942) (1.7220,1.5609) (0.0001,2.2967)};
\addlegendentry{E4}
\node[font=\scriptsize,anchor=south west] at (axis cs:12.8,1.0) {$\lambda{=}0.005$ end};
\node[font=\scriptsize,anchor=west] at (axis cs:0.4,2.22) {$\lambda{=}0.5$ end};

% ---------- (b) ECE vs lambda ----------
\nextgroupplot[xmode=log, xmin=0.0035, xmax=0.7,
  xtick={0.005,0.01,0.02,0.05,0.1,0.2,0.5},
  xticklabels={0.005,0.01,0.02,0.05,0.1,0.2,0.5},
  xlabel={$\lambda$}, ylabel={macro-average ECE}, ylabel style={xshift=8pt,  yshift=-12pt}, ymin=0, ymax=0.21, title={(b)},
  legend style={font=\scriptsize,at={(0.9,0.97)},anchor=north east}]
\addplot[blue!70!black,thick,mark=o,mark size=1.5pt,error bars/.cd,y dir=both,y explicit] coordinates {(0.005,0.0942) +- (0,0.0075) (0.009653,0.0866) +- (0,0.0080) (0.018638,0.0738) +- (0,0.0088) (0.035984,0.0552) +- (0,0.0077) (0.069475,0.0372) +- (0,0.0063) (0.134135,0.0388) +- (0,0.0067) (0.258974,0.1064) +- (0,0.0083) (0.5,0.1835) +- (0,0.0056)};
\addlegendentry{E1}
\addplot[orange!90!black,thick,mark=square,mark size=1.5pt,error bars/.cd,y dir=both,y explicit] coordinates {(0.005,0.1061) +- (0,0.0109) (0.009653,0.1059) +- (0,0.0109) (0.018638,0.1007) +- (0,0.0108) (0.035984,0.0962) +- (0,0.0102) (0.069475,0.0877) +- (0,0.0103) (0.134135,0.0763) +- (0,0.0081) (0.258974,0.0759) +- (0,0.0035) (0.5,0.1691) +- (0,0.0153)};
\addlegendentry{E2}
\addplot[green!50!black,thick,mark=triangle,mark size=1.5pt,dashed,error bars/.cd,y dir=both,y explicit] coordinates {(0.005,0.0337) +- (0,0.0044) (0.009653,0.0232) +- (0,0.0032) (0.018638,0.0229) +- (0,0.0032) (0.035984,0.0281) +- (0,0.0020) (0.069475,0.0550) +- (0,0.0039) (0.134135,0.0799) +- (0,0.0044) (0.258974,0.0785) +- (0,0.0039) (0.5,0.0784) +- (0,0.0060)};
\addlegendentry{E3}
\addplot[red!70!black,thick,mark=diamond,mark size=1.5pt,dashed,error bars/.cd,y dir=both,y explicit] coordinates {(0.005,0.0439) +- (0,0.0047) (0.009653,0.0540) +- (0,0.0072) (0.018638,0.0491) +- (0,0.0045) (0.035984,0.0473) +- (0,0.0035) (0.069475,0.0535) +- (0,0.0038) (0.134135,0.0683) +- (0,0.0054) (0.258974,0.1025) +- (0,0.0188) (0.5,0.0325) +- (0,0.0019)};
\addlegendentry{E4}
\end{groupplot}
\end{tikzpicture}
\caption{Macro-average over 10 seeds (E1 blue, E2 orange, E3 green dashed, E4 red dashed). (a) Test CE versus surrogate rate $R_{\rm agg}$, from $\lambda=0.005$ (largest $R_{\rm agg}$) to $0.5$. (b) ECE versus $\lambda$ (bars: 95\% CI half-widths). The low E4 ECE at $\lambda=0.5$ reflects a collapsed predictor.}
\label{fig:ce-rate-ece}
\end{figure}

\begin{table}[t]\centering\small
\caption{Empirical proxy for the sufficiency deficit $\epsilon_U=I(X;Y\mid U)$ of the frozen
interface $U=\mathrm{softmax}(\ell_\vartheta(F_\theta(X)))$, on the full test sets
(mean over 10 seeds). $\mathrm{CE}_H$ is the test cross-entropy of the frozen $F$ with
its stored linear head, an upper bound on $H(Y\mid U)$; $\mathrm{CE}_X$ is that of a wider
end-to-end CNN trained on $X$, an upper bound on $H(Y\mid X)$.
$\hat\epsilon_H=\mathrm{CE}_H-\mathrm{CE}_X$ is a difference of two upper bounds, and small negative values (CIFAR-10) are
estimation error, i.e.\ indistinguishable from zero. $\Phi(\hat\epsilon)$ is the uniform
stability bound of Theorem~\ref{thm:stab} evaluated at $\hat\epsilon$ (negative values
clipped to $0$).}\label{tab:suff}
\begin{tabular}{@{}lcccccc@{}}\toprule
 & \multicolumn{3}{c}{Per-domain $F$ (E1)} & \multicolumn{3}{c}{Pooled $F$ (E2)}\\
\cmidrule(lr){2-4}\cmidrule(lr){5-7}
Dataset & $\mathrm{CE}_H$ & $\hat\epsilon_H$ & $\Phi(\hat\epsilon_H)$ & $\mathrm{CE}_H$ & $\hat\epsilon_H$ & $\Phi(\hat\epsilon_H)$\\\midrule
MNIST   & 0.116 & 0.009 & 0.39 & 0.146 & 0.039 & 0.71\\
SVHN    & 0.917 & 0.025 & 0.59 & 0.909 & 0.018 & 0.52\\
CIFAR10 & 1.189 & $-0.001$ & 0.00 & 1.238 & 0.049 & 0.78\\
STL10   & 1.223 & 0.011 & 0.43 & 1.296 & 0.084 & 0.96\\
\bottomrule\end{tabular}
\end{table}

%% ---------------------------------------------------------------------------
%% ---------------------------------------------------------------------------
\subsection{What the rate controller must express: noisy XOR}\label{sec:xor}
Sufficiency says separation is possible, not that a given compressor class realizes it.

\begin{example}[Noisy XOR]\label{ex:xor}
Let $X$ be uniform on $\{\pm1\}^2$, $Y_0=X_1X_2$, and $Y=Y_0(-1)^{E}$ with
$E\sim\mathrm{Bern}(p)$ independent of $X$, $p=0.1$. Since $Y-Y_0-X$, the map
$F(X)=Y_0$ is sufficient ($\epsF=0$). Let $U=Y_0$ (uniform on $\{\pm1\}$) and compare
(a) \emph{Gaussian-noise compressors} $T=\alpha U+\xi$, $\xi\sim\mathcal N(0,1)$
independent of $U$, with $\alpha\ge0$ free for each $\lambda$; and
(b) unrestricted channels $P_{T|U}$.
\end{example}

\begin{proposition}[Unrestricted optimum; Gaussian-noise gap]\label{prop:mgl}
In Example~\ref{ex:xor}, let $g_\lambda(q)=\hb(p*q)+\lambda(\log2-\hb(q))$. Then:
\begin{enumerate}[label=(\alph*),leftmargin=*,itemsep=1pt]
\item $\Js(\lambda)=\min_{q\in[0,1/2]}g_\lambda(q)$, attained by
$T=Y_0(-1)^N$, $N\sim\mathrm{Bern}(q)$ independent of $Y_0$.
\item For the unrestricted class (b), $\Gap_{F,\cC}(\lambda)=0$ for every $\lambda>0$.
\item If $0<\lambda<(1-2p)^2$, every minimizer satisfies $q^\star\in(0,\tfrac12)$, and the
Gaussian-noise class (a) has a strictly positive gap,
$\inf_{\alpha\ge0}J_\lambda(\alpha)-\Js(\lambda)>0$, where $J_\lambda(\alpha)$ is the
objective of $T=\alpha Y_0+\xi$. The gap is positive for each fixed $\lambda$, not
uniformly in $\lambda$.
\end{enumerate}
\end{proposition}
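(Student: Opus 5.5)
For (a), I would reduce to the IB problem for the pair $(U,Y)$ with $U=Y_0$. Since $\epsF=0$, Theorem~\ref{thm:exact} (with Lemma~\ref{lem:red}) gives $\Js(\lambda)=\JF(\lambda)=\inf_{P_{T|U}}[H(Y|T)+\lambda\I(U;T)]$. Here $U$ is a uniform bit, and $Y$ is $U$ passed through a $\mathrm{BSC}(p)$.

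For an arbitrary channel $P_{T|U}$, I would condition on $T=t$. Let $q_t=\min(P(U{=}1|t),P(U{=}{-}1|t))$. Then $H(U|T=t)=\hb(q_t)$ and $H(Y|T=t)=\hb(p*q_t)$, since $Y$ given $t$ is $U$ given $t$ passed through the BSC. So
\[
H(Y|T)+\lambda\I(U;T)=\E_t\big[\hb(p*q_t)+\lambda(\log2-\hb(q_t))\big]=\E_t\,g_\lambda(q_t)\ge\min_{q\in[0,1/2]}g_\lambda(q).
\]
The minimum exists because $g_\lambda$ is continuous on a compact interval. The symmetric channel $T=Y_0(-1)^N$ with $N\sim\mathrm{Bern}(q)$ has $q_t\equiv q$ and attains $g_\lambda(q)$. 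This proves (a).

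Part (b) is then immediate. The optimal $T$ from (a) is a channel on $U$, so $J^{F,\cC}=\JF=\Js$. Alternatively, this follows directly from Theorem~\ref{thm:exact}.

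For (c), the first step is to show the minimizer is interior. The constraint $\lambda<1$ holds automatically, since $(1-2p)^2<1$.
\begin{itemize}
\item \emph{Endpoint $q=1/2$.} Here $g_\lambda(1/2)=\log2$. The one-sided derivative at $1/2$ is $0$, so I would use the second-order behaviour. Near $q=1/2$ write $q=1/2-\delta$, so that $p*q=1/2-(1-2p)\delta$. With $\hb(1/2-\delta)\approx\log2-2\delta^2$, this gives $g_\lambda\approx\log2-2\delta^2[(1-2p)^2-\lambda]$. That is strictly below $\log2$ whenever $\lambda<(1-2p)^2$, so $q=1/2$ is not a minimizer.
\item \emph{Endpoint $q=0$.} Here $g_\lambda'(q)=(1-2p)\hb'(p*q)-\lambda\hb'(q)$. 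As $q\downarrow0$, $\hb'(q)\to+\infty$ while $\hb'(p*q)\to\hb'(p)$ stays finite. Hence $g_\lambda'\to-\infty$, and $q=0$ is not a minimizer either.
\end{itemize}
Thus every minimizer satisfies $q^\star\in(0,1/2)$, and $\Js(\lambda)<\min(g_\lambda(0),g_\lambda(1/2))$.

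The second step, which I expect to be the main obstacle, is showing the Gaussian-noise class has a strictly positive gap. I would argue by strictness in the conditioning argument above. For $T=\alpha Y_0+\xi$ with $\alpha>0$, the posterior parameter is $q_t=1/(1+e^{2\alpha|t|})$. This takes a continuum of values in $(0,1/2)$, and it equals $q^\star$ only on a Lebesgue-null set of $t$. Since $g_\lambda$ is real-analytic on $(0,1/2)$ and not constant, the set $\{q:g_\lambda(q)=\min g_\lambda\}$ is finite. Hence $J_\lambda(\alpha)=\E_t\,g_\lambda(q_t)>\Js(\lambda)$ for each $\alpha>0$. For $\alpha=0$ we get $J_\lambda(0)=g_\lambda(1/2)>\Js(\lambda)$.

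Strictness for each $\alpha$ does not yet bound the infimum away from $\Js$. For that, I would use compactness:
\begin{itemize}
\item $J_\lambda(\alpha)$ is continuous in $\alpha$, by dominated convergence, since $g_\lambda$ is bounded.
\item As $\alpha\to\infty$, $q_t\to0$ in probability, so $J_\lambda(\alpha)\to g_\lambda(0)>\Js(\lambda)$.
\end{itemize}
Therefore $\inf_{\alpha\ge0}J_\lambda(\alpha)$ equals either a minimum over a compact interval $[0,A]$ or a value close to $g_\lambda(0)$, and is strictly greater than $\Js(\lambda)$ in both cases. The only delicate points are the continuity estimate and the limiting behaviour as $\alpha\to\infty$. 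No uniformity in $\lambda$ is claimed, consistent with the statement.
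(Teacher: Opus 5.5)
Your proof is correct. Part (b) and the interior-minimizer half of (c) match the paper's: $g_\lambda'\to-\infty$ at $q=0$, the second-order behaviour $g_\lambda''(1/2)=-4[(1-2p)^2-\lambda]$ at $q=1/2$, and a compactness argument on $[0,\infty]$ at the end.

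Where you differ is the engine for (a) and for the strict Gaussian-noise gap.

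\begin{itemize}
\item \emph{The paper's route.} It invokes Mrs.\ Gerber's lemma: $H(Y|T=t)=\phi_p(H(Y_0|T=t))$ with $\phi_p$ strictly convex. Jensen then gives $H(Y|T)\ge\phi_p(\log2-\iota)$. For $T=\alpha Y_0+\xi$, strictness follows because $H(Y_0|T=t)$ is nonconstant in $t$.
\item \emph{Your route.} You observe that, since the objective is a Lagrangian and $H(U)=\log2$, it decomposes exactly as $\E_t\,g_\lambda(q_t)$. The lower bound $\min g_\lambda$ therefore needs no convexity at all. Strictness then comes from the posterior parameter $q_t=1/(1+e^{2\alpha|t|})$ landing in the minimizer set of $g_\lambda$ only on a Lebesgue-null set of $t$.
\end{itemize}

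\textbf{What each buys.} Your argument is more elementary (no appeal to the Wyner--Ziv convexity result). It also yields a clean characterization: a channel on $U$ is optimal at $\lambda$ iff $q_t$ lies in the minimizer set of $g_\lambda$ almost surely. The price is that strictness depends on the structure of that minimizer set, which you get from analyticity. The paper's strict-Jensen argument works wherever the minimizers lie. It also proves more: the rate-constrained frontier $\min\{H(Y|T):\I(U;T)\ge\iota\}=\phi_p(\log2-\iota)$, attained by binary symmetric channels. Your pointwise decomposition alone would identify that frontier only as the lower convex envelope of $\phi_p$.

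\textbf{One small tightening.} Real-analyticity on $(0,1/2)$ by itself only makes the minimizer set discrete there. Finiteness follows because you have already shown both endpoints are not minimizers, so the set is a compact subset of $(0,1/2)$. In any case, countability is all your null-set argument needs.
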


\begin{proof}
\emph{Step 1 (formula for $\Js$).} Since $Y-Y_0-X$, $F=Y_0$ is sufficient ($\epsF=0$), so by
Theorem~\ref{thm:exact} it suffices to consider channels $P_{T|U}$ with $U=Y_0$; then
$\I(X;T)=\I(Y_0;T)=:\iota$ and $H(Y_0)=\log2$. 
Define $p*q=p(1-q)+q(1-p)$ and
$\phi_p(u)=\hb(p*\hb^{-1}(u))$, where $\hb^{-1}$ denotes the
inverse of binary entropy on $[0,\tfrac12]$. For $p\in(0,\tfrac12)$,
$\phi_p$ is strictly convex on $(0,\log2)$ \cite{wyner73}.
Given $T=t$, $Y_0$ is Bernoulli and $Y$ is its
image through a $\mathrm{BSC}(p)$, so $H(Y|T=t)=\phi_p\big(H(Y_0|T=t)\big)$; this is an equality, and the convexity of $\phi_p$ is the content of Mrs.\ Gerber's lemma \cite{wyner73}.
So, $H(Y|T)\ge\phi_p(\log2-\iota)$, hence
$\Jl\ge\phi_p(\log2-\iota)+\lambda\iota$. A $\mathrm{BSC}(q)$ from $Y_0$ has
$\iota=\log2-\hb(q)$ and $\phi_p(\hb(q))=\hb(p*q)$, so it attains equality. This proves (a).

\emph{Step 2 (interior minimizer).} We have
$g_\lambda'(q)=(1-2p)\hb'(p*q)-\lambda\hb'(q)$ with $\hb'(q)=\log\frac{1-q}{q}$. As
$q\downarrow0$, $g_\lambda'(q)\to-\infty$, so $q=0$ is not a minimizer. At $q=\tfrac12$,
$g_\lambda'=0$ and $g_\lambda''(\tfrac12)=-4\big[(1-2p)^2-\lambda\big]<0$ for
$\lambda<(1-2p)^2$, so $q=\tfrac12$ is a local maximum. By compactness, every minimizer
$q^\star$ lies in $(0,\tfrac12)$ and solves $g_\lambda'(q^\star)=0$, i.e.
\(
(1-2p)\log\frac{1-p*q^\star}{p*q^\star}=\lambda\log\frac{1-q^\star}{q^\star}.
\)

\emph{Step 3 (unrestricted class).} The channel of Step~1 with $q=q^\star(\lambda)$ is a
channel on $U=Y_0$, so it belongs to the unrestricted class and attains $\JF(\lambda)=\Js(\lambda)$.
Hence $J^{F,\cC}(\lambda)=\JF(\lambda)=\Js(\lambda)$ and $\Gap_{F,\cC}(\lambda)=0$, which is (b).
The budget enters only through the scalar $q^\star(\lambda)$, as in the Gaussian case, but the
optimal noise is binary rather than Gaussian. Equivalently, in the decomposition
\eqref{eq:decomp} both terms vanish for class (b), whereas for class (a) the representation
term $\Gap_F$ is still zero and the whole gap comes from compressor expressiveness,
$J^{F,\cC}-\JF$.

\emph{Step 4 (Gaussian-noise class).} For $T=\alpha Y_0+\xi$, $\Pr(Y_0{=}1|t)=\mathrm{sig}(2\alpha t)$, \(\mathrm{sig}\) is sigmoid function, so $H(Y_0|T=t)$ is nonconstant for $\alpha>0$ and Jensen is strict:
$J_\lambda(\alpha)>\Js(\lambda)$ for all $\alpha\in(0,\infty)$. $J_\lambda$ extends
continuously to $[0,\infty]$ with $J_\lambda(0)=\log2=g_\lambda(\tfrac12)$ and
$J_\lambda(\infty)=g_\lambda(0)$, both strictly larger than $\Js(\lambda)$ by Step~2. A
continuous function on the compact $[0,\infty]$ that exceeds $\Js(\lambda)$ everywhere has a
minimum exceeding $\Js(\lambda)$, which proves (c).
\end{proof}

\section{Learning algorithms and cost model}\label{sec:alg}
Fig.~\ref{fig:stages} shows the pipeline as implemented in the four-dataset experiments.
\paragraph{Separated pipeline.}
The compressor receives $U:=Z=\mathrm{softmax}(\ell_\vartheta(V))$ with $V=F_\theta(X)$; $V$ is used only to compute it. The representation $F$ of the separation theory is therefore $\mathrm{softmax}\circ\ell_\vartheta\circ F_\theta$, not $F_\theta$ alone.
\emph{Stage~1} trains $F_\theta$ and a trainable affine classifier head $\ell_\vartheta(v)=B_\ell v+c_\ell$, $\vartheta=(B_\ell,c_\ell)$, by cross-entropy only (no rate term, no $\lambda$):
\begin{equation}
(\theta^\star,\vartheta^\star)=\argmin_{\theta,\vartheta}\E\big[-\log\mathrm{softmax}(\ell_\vartheta(F_\theta(X)))_Y\big].
\label{eq:stage1}
\end{equation}
$F_{\theta^\star}$ and $\ell_{\vartheta^\star}$ are then frozen. In the four-dataset implementation the head is \emph{retained}, because the implemented interface is $Z=\mathrm{softmax}(\ell_{\vartheta^\star}(F_{\theta^\star}(x)))\in\R^{10}$; the head's test cross-entropy equals $\mathrm{CE}(\psi_0\circ F_{\rm int})$ for the unsmoothed predictor $\psi_0(z)=z$; the smoothed predictor $\psi_\gamma$ of Section~\ref{sec:sens} is used only to state the bounded-log-loss certificate and is not evaluated in the experiments. 

\emph{Stage~2} trains, for each fixed budget $\lambda_j$, one compressor $C_j$ and decoder $G_j$ on cached interface features, minimizing $\mathrm{CE}+\lambda_jR_{\rm agg}$ (Lemma~\ref{lem:var}). The latent is $T=\mu+\sigma\odot\zeta$, $\zeta\sim\mathcal N(0,\Id)$, with $(\mu,\log\sigma^2)=C_j(Z)$, and the aggregate prior is a diagonal Gaussian $\mathcal N(\bar\mu,\sigma_T^2)$,
\begin{equation}
\bar\mu=\mathrm{mean}_b(\mu),\quad \sigma_T^2=\mathrm{var}_b(\mu)+\mathrm{mean}_b(\sigma^2),\quad R_{\rm agg}=\tfrac1{|b|}\textstyle\sum_{x\in b}\KL\big(\mathcal N(\mu_x,\sigma_x^2)\,\|\,\mathcal N(\bar\mu,\sigma_T^2)\big),
\label{eq:Ragg}
\end{equation}
computed per minibatch $b$ with stop-gradient through $(\bar\mu,\sigma_T^2)$ during training. At evaluation, where the prior statistics are fitted on training inputs and frozen, Lemma~\ref{lem:var} with $\nu=\mathcal N(\bar\mu,\sigma_T^2)$ gives $R_{\rm agg}\ge\I(X;T)$; during training the minibatch statistics make $R_{\rm agg}$ only a surrogate, whose tightness differs across models.

\paragraph{Why two-stages-learning?}
Let $U=F_{\rm int}(X)$ be the probability interface of
Section~\ref{sec:sens}, so $Y-X-U-T$ and $\epsilon_F=I(X;Y\mid U)$.
Since $I(X;T)=I(U;T)$ and
$H(Y|T)-H(Y|X)=\epsilon_F+I(U;Y|T)$,
\(
\Jl(C\circ F)=H(Y\mid X)+\epsilon_F+I(U;Y\mid T)+\lambda I(U;T).
\)
Thus $\epsilon_F$ depends on neither $C$ nor $\lambda$, and
$\JF(\lambda)=H(Y|U)+\Psi_F(\lambda)$ with
$\Psi_F(\lambda)=\inf_{P_{T|U}}\{I(U;Y|T)+\lambda I(U;T)\}$, which
depends on $F$ only through $P_{UY}$. This is why Stage~1 involves no
$\lambda$ and Stage~2 is repeated per budget. Cross-entropy training controls $\epsilon_F$, which depends on $F$ through $P_{UY}$.

\emph{Stage~1.} By Eq.~\eqref{eq:sens-defs}, $\mathrm{CE}(\psi\circ F)=H(Y|X)+\epsilon_F+\eta_\psi(F)$, so
$\epsilon_F\le\mathrm{CE}(\psi\circ F)-H(Y|X)$ at the population level
(equality iff $\psi=\pi_F$). 
\emph{Stage~2.} For a decoder $g$ and a prior $\nu$ fixed in advance,
Lemma~\ref{lem:var} and Theorem~\ref{thm:dec-id} give
$\mathrm{CE}(g)+\lambda R_\nu=\Jl+\delta_G+\lambda\,\KL(P_T\|\nu)$ with
$\delta_G=\E\,\KL(P_{Y|T}\|g(\cdot|T))$. 
Theorems~\ref{thm:exact} and~\ref{thm:stab} give $\JF=\Js$ if
$\epsilon_F=0$ and $\JF-\Js\le\Phi(\epsilon_F)$ in general, for
unrestricted compressors.

\section{Experiments}\label{sec:exp}
\subsection{Design and protocol}\label{sec:multi}

\paragraph{Questions addressed.}
The experiments address the following questions: (i) whether a frozen representation with one compressor--decoder pair per budget matches independently trained VIBs in accuracy (Table~\ref{tab:main}, Fig.~\ref{fig:results}a, Tables~\ref{tab:app1}--\ref{tab:app2}); (ii) what this does to cross-entropy, surrogate rate and calibration (Fig.~\ref{fig:results}b,c, Fig.~\ref{fig:ce-rate-ece}); (iii) what it saves (Table~\ref{tab:architecture_cost}); and (iv) whether the frozen interface looks nearly sufficient on a coarse proxy (Table~\ref{tab:suff}). Fig.~\ref{fig:stages} and Algorithm~\ref{alg:sep} specify the separated pipeline evaluated in E1 and E2.
Four main experiments cross \{separated pipeline, end-to-end VIB\} with \{per-domain, pooled\}; in each, one model is trained \emph{per budget} and no model receives $\lambda$ or a domain identifier as input.
\begin{itemize}[leftmargin=*,itemsep=1pt]
\item \textbf{E1 (per-domain separation).} For each domain $i=\{1,2,3,4\}$, train a representation $F_i$ once, freeze it, and train a pair $(C_{ij},G_{ij})$ for each budget $\lambda_j$, Algorithm~\ref{alg:sep} for \(J=8\): 4 representations, 32 pairs. Compared with E3.
\item \textbf{E2 (pooled separation).} Shuffle the four training sets, each from one domain, into one pool, train one shared $F$, freeze it, and train one pair $(C_j,G_j)$ per budget, Algorithm~\ref{alg:sep} for \(J=8\): 1 representation, 8 pairs. Compared with E4.
\item \textbf{E3 (per-domain VIB).} For each domain $i=\{1,2,3,4\}$ and budget $\lambda_j, j=\{1,2,\dots,8\}$, independently train an end-to-end VIB per each budget: 32 models.
\item \textbf{E4 (pooled VIB).} For each budget $\lambda_j, j=\{1,2,\dots,8\}$, independently train one VIB on the shuffled pool: 8 models.
\end{itemize}
E1 versus E3 compares the two learning strategies within each domain; E2 versus E4 compares them in the pooled setting. E2 versus E1 and E4 versus E3 are \emph{not} clean comparisons of pooling: the pooled experiments use four times as many training images, and the common label space is not semantically consistent (below).

\paragraph{Additional baselines (E5--E8).}
Two further families, our own implementations, are run on the same data, grid, seeds and pool, and appear in Table~\ref{tab:main}, Fig.~\ref{fig:results} and Table~\ref{tab:architecture_cost}. \textbf{E5/E6} are FiLM-conditioned VIBs: one network that \emph{does} receive $\lambda$ as input (E6: one per domain; E5: one on the pool, numbering as in Table~\ref{tab:main}), step-matched to the E3 training budget. \textbf{E7/E8} are OIB (bank-of-NN): a pretrained MLP on DFT features, a closed-form Gaussian-IB compressor and one decoder per budget (E7 per domain, E8 pooled). These rows are descriptive: paired intervals are computed for E1$-$E3 and E2$-$E4.

\paragraph{Data and training.}
MNIST, SVHN, CIFAR-10 and STL-10 (resized to $32\times32$, scaled to $[-1,1]$); per seed, $5000$
training images per dataset ($10\%$ for validation), full official test sets (pooled: $54032$
images). Labels are matched by index (digit $i$, CIFAR/STL class \texttt{CIFAR\_ORDER}$[i]$,
\texttt{STL\_ORDER}$[i]$). The grid is eight log-spaced budgets
$\lambda\in\{0.0050,0.0097,0.0186,0.0360,0.0695,0.1341,0.2590,0.5000\}$ (so $\beta=1/\lambda\in\{200,\dots,2\}$), shared by all experiments and not tuned.
$F$ is a three-convolution CNN with output dimension $m=32$ and an affine head; $C_j$ and
$G_j$ are small MLPs (Section~\ref{sec:alg}). All stages use Adam for $50$ epochs and the
best-validation-epoch weights (the pooled representation overfits after about $15$ epochs, so its checkpoint selection matters); the aggregate prior is fitted on up to $10^4$ training inputs and
frozen. Metrics use $5$ noise draws, and ECE uses $15$ bins. We report means over $10$ seeds with
$t$-based $95\%$ intervals and paired differences averaged over the grid.

\subsection{Results}\label{sec:results}

\paragraph{Accuracy.}
The separated pipelines have higher grid-averaged accuracy (differences in percentage points, pp): $+8.29$ pp [$+7.75,+8.83$] for
E1$-$E3 and $+9.77$ pp [$+8.59,+10.95$] for E2$-$E4 (macro), and $+10.08$ pp [$+8.81,+11.35$]
for E2$-$E4 on the pooled test set. The advantage is small at small $\lambda$ ($1.60$ pp at
$\lambda=0.005$ for E1$-$E3, $1.72$ pp for E2$-$E4) and large at large $\lambda$, where the VIBs are near to
collapse: at $\lambda=0.5$ E4 reaches $12.63\%$ (CE $2.2970\approx\ln10$), against $47.45\%$ (E1)
and $51.91\%$ (E2) for the separated pipelines (Fig.~\ref{fig:results}a).
Table~\ref{tab:main} gives the budget-averaged levels: E1 $65.59\pm0.49$ versus E3 $57.30\pm0.23$ per domain, and E2 $66.64\pm0.75$ versus E4 $56.87\pm0.91$ (macro; $67.14\pm0.90$ versus $57.06\pm0.90$ on the pooled test set).
The ordering is not driven by one domain. In Tables~\ref{tab:app1} and~\ref{tab:app2}, the mean accuracy of E1 exceeds that of E3, and that of E2 exceeds that of E4, in all $32$ dataset--budget cells (E2 also exceeds E4 at all eight budgets on the pooled test set); at $\lambda=0.005$ the E1$-$E3 margin ranges from $0.3$ pp (MNIST) to $4.0$ pp (SVHN).
The collapse at $\lambda=0.5$ is visible per dataset: E4 sits at chance ($10.00\%$ on CIFAR-10, $10.01\%$ on STL-10) with $R=0.00$, and E3 has $R\le0.12$ nats on three of four datasets, whereas the separated pipelines keep $R\ge0.73$ nats and reach $29$--$76\%$ (E1) on every dataset. The wide interval of E4 at $\lambda=0.259$ ($\pm7.06$ pp, Fig.~\ref{fig:results}a) shows that the onset of collapse is itself seed dependent.

\paragraph{Rate and objective.}
At small $\lambda$ the separated pipelines use less surrogate rate ($13.64$ versus $19.00$ nats for
E1/E3, $11.70$ versus $19.23$ for E2/E4 at $\lambda=0.005$) but have higher cross-entropy ($+0.1228$
and $+0.0724$ nats). The net effect on $\overline{\mathcal L}$ is mixed: $\overline{\mathcal L}_{\rm E1}-\overline{\mathcal L}_{\rm E3}=+0.0735$ [$+0.0658,+0.0812$] for
E1$-$E3; for E2$-$E4 the macro difference is not significant ($-0.0144$ [$-0.0331,+0.0043$]), while on the pooled test
set E2 is lower ($-0.0351$ [$-0.0534,-0.0168$]).
The small-$\lambda$ penalty is offset on average by the VIB's loss at large $\lambda$ (Fig.~\ref{fig:results}b): the VIBs have lower CE for $\lambda\le0.0695$ and higher CE from $\lambda=0.1341$ on, so the grid-averaged CE is nearly equal for E1/E3 ($1.1709$ versus $1.1806$) and lower for E2 than E4 ($1.1461$ versus $1.2472$, Table~\ref{tab:main}). The surrogate rate decreases monotonically in $\lambda$ for every method (Fig.~\ref{fig:results}c), sweeping E1 from $13.6$ to $1.3$ nats and E4 to $0.0002$ nats at $\lambda=0.5$, so equal nominal $\lambda$ does not mean equal rate.
Fig.~\ref{fig:ce-rate-ece}a therefore plots CE against the measured $R_{\rm agg}$. Reading the plotted means, E1 and E3 nearly coincide below about $2.5$ nats, and E3 has lower CE for $R\gtrsim3$ nats (by $0.03$--$0.10$ nats up to $R\approx6$); E2 lies below E4 for $R\lesssim4$ nats (by up to $\approx0.13$) and above it for larger $R$ (by $\lesssim0.04$). The accuracy advantage at the nominal budgets thus reflects that the separated pipeline keeps operating at useful rates at all eight $\lambda$, also at higher rates, the end-to-end VIBs are the better compressors.

\paragraph{Calibration.}
Fig.~\ref{fig:ce-rate-ece}b and Table~\ref{tab:main} show (expected calibration error, ECE \cite{guo17}, 15 bins) that the separated pipelines are less calibrated on average (ECE $0.0862$ versus $0.0511$ for E1/E3; $0.1003$ versus $0.0569$ for E2/E4; $0.0804$ versus $0.0357$ on the pooled test set). The gap is largest at small $\lambda$ ($\approx0.09$--$0.11$ versus $0.03$--$0.04$ at $\lambda=0.005$), and E1 is non-monotone in $\lambda$ (minimum $0.0372$ at $\lambda=0.0695$, $0.1835$ at $\lambda=0.5$). 

\paragraph{FiLM and OIB baselines.}
We compare the $95\%$ intervals of Table~\ref{tab:main}. The FiLM-conditioned VIB is the strongest baseline: it has the lowest macro CE among per-domain methods ($1.1227$ for E6 versus $1.1709$ for E1 and $1.1806$ for E3) and, jointly with E3, the lowest $\overline{\mathcal L}$ ($1.4126\pm0.0121$ versus $1.4139\pm0.0043$ for E3 and $1.4874\pm0.0064$ for E1). It also degrades more gracefully than independent VIBs at $\lambda=0.5$ ($45.15\%$ and $37.40\%$ for E6 and E5 versus $27.71\%$ and $12.63\%$ for E3 and E4, Fig.~\ref{fig:results}a). Its budget-averaged accuracy is nevertheless $60.86\%$ (E6) and $60.67\%$ (E5), i.e.\ $4.7$ and $6.0$ pp below E1 and E2. In the pooled setting the macro $\overline{\mathcal L}$ intervals of E5 and E2 overlap ($1.4840\pm0.0120$ versus $1.4646\pm0.0175$), while on the pooled test set E2 is lower ($1.4540\pm0.0168$ versus $1.5007\pm0.0148$).
OIB (E7/E8) reaches only $42.72\%$ and $38.98\%$, and its accuracy is nearly flat across a $100\times$ range of $\lambda$ ($43.4\%$ to $39.0\%$ for E7), consistent with a closed-form compressor on fixed features whose rate $R_{\rm agg}$ ($128$--$149$ nats, Table~\ref{tab:main}) is not commensurate with the learned compressors; it is therefore omitted from Fig.~\ref{fig:results}c.

\paragraph{Cost.}
Separation reduces estimated training MACs by $7.7\times$ ($3.039\times10^{13}$ versus
$2.347\times10^{14}$) and wall-clock by $2.1\times$ (E1 vs.\ E3, $141.3$ versus $292.9$ s) and
$2.1\times$ (E2 vs.\ E4, $138.2$ versus $294.1$ s). Wall-clock is dominated by the many small $(C,G)$ fits, so MACs overstate
the realized speed-up. MAC counts are approximate estimates. A new budget costs one additional $(C,G)$ fit, not zero.
Table~\ref{tab:architecture_cost} gives the full-grid totals for all methods. The step-matched FiLM networks (E5/E6)use $2.355\times10^{14}$ MACs, close to E3 ($2.347\times10^{14}$) by construction, and about $3.5\times$ the wall-clock of the separated pipeline ($487.3/489.3$ s versus $141.3/138.2$ s). OIB is the cheapest ($1.83/1.75\times10^{12}$ MACs, $76.4/73.9$ s, i.e.\ $\approx17\times$ fewer MACs and $\approx1.9\times$ less time than separation) but is $23$--$28$ pp less accurate (Table~\ref{tab:main}).

\paragraph{Frozen representation and sufficiency proxy.}
Table~\ref{tab:suff} is the link between the experiments and Theorem~\ref{thm:stab}. It reports $\hat\epsilon=\mathrm{CE}_{\rm head}-\mathrm{CE}_X$ for the frozen interface. For the per-domain $F$ (E1), $\hat\epsilon\le0.025$ nats on every dataset (MNIST $0.009$, SVHN $0.025$, CIFAR-10 $-0.001$, STL-10 $0.011$), small next to a CE of $0.9$--$1.2$ on SVHN, CIFAR-10 and STL-10. For the pooled $F$ (E2) it is larger on three of four datasets ($0.039$, $0.049$, $0.084$ for MNIST, CIFAR-10, STL-10), as expected if one $F$ must serve four unaligned label semantics (the alignment term of Proposition~\ref{prop:pooled}). This is directionally consistent with E2 having higher CE than E1 on CIFAR-10 and STL-10 at small $\lambda$ (Table~\ref{tab:app2}), but not on MNIST (Table~\ref{tab:app1}).
Two caveats apply. Since $\hat\epsilon=\epsilon_U+\eta_{\rm head}-\eta_X$ is a difference of two upper bounds, its error can have either sign, so it neither bounds nor lower-bounds $\epsilon_U$. And $\Phi(\hat\epsilon)$ ($0$--$0.96$ nats) is of the order of the CE itself.

\paragraph{Summary of evidence.}
Higher budget-averaged accuracy than independently trained VIBs in every dataset--budget cell, no collapse over the grid, and $7.7\times$ fewer training MACs ($\approx2\times$ wall-clock). Against FiLM, separation is more accurate ($4.7$--$6.0$ pp), uses about the same MACs and about $3.5\times$ less wall-clock, but the per-domain FiLM has lower CE and $\overline{\mathcal L}$.

Tables~\ref{tab:app1} and~\ref{tab:app2} list accuracy, test cross-entropy and surrogate rate for each dataset (and the pooled test set) at each of the eight prespecified budgets; Fig.~\ref{fig:ce-rate-ece} shows CE against surrogate rate (a) and the macro-average ECE (b). All entries are means over 10 seeds. The grid is shared by all experiments; the tables are not used to pick an operating point. Rows at the same $\lambda$ are equal-weight, not equal-rate, comparisons.

%% ---------------------------------------------------------------------------
%% ---------------------------------------------------------------------------
\section{Conclusion}\label{sec:disc}

Gaussian IB shows that information-bearing coordinates and rate allocation are different
objects. Throughout, the budget is given and not optimized. We generalized this to a statement about \emph{which representations carry the whole IB
path}: exactly the sufficient ones, with
worst-case loss bounded by the single deficit $\epsF$ through $\Phi$, by
$\sum_sw_s\epsF(P_s)+I(Y;S|F(X))$ for pooled domains, and with no
loss at all for $\lambda\ge1$. The same separation holds at the level
of the IB self-consistent iteration.

Experimentally, freezing a CE-trained representation and fitting one head per budget gives
\textbf{a $7.7\times$ reduction in estimated training MACs} ($2.1\times$ in wall-clock) and
\textbf{higher budget-averaged accuracy than VIBs trained independently at each $\lambda$} ($\mathbf{+8.29}$ pp per domain,
$\mathbf{+10.08}$ pp on the pooled test set), with the largest accuracy gains at large $\lambda$, where the
VIBs collapse. At small $\lambda$ the separated pipeline has higher cross-entropy and ECE, and its surrogate
Lagrangian is higher in the per-domain setting ($+0.0735$ nats, 95\% CI $[+0.0658,+0.0812]$,
macro-average), not distinguishable from that of the pooled VIB in the macro average ($-0.0144$ nats,
$[-0.0331,+0.0043]$) and lower on the pooled test set ($-0.0351$ nats, $[-0.0534,-0.0168]$). The pooled representation is checkpoint-sensitive. Overall, separation gives lower training cost and higher budget-averaged accuracy against VIBs trained independently at each $\lambda$, at the price of cross-entropy and calibration at small $\lambda$.

%% ---------------------------------------------------------------------------
%% ---------------------------------------------------------------------------

\end{document}